%% file: neurips_2026.tex
\documentclass{article}

\PassOptionsToPackage{numbers,compress}{natbib}

\usepackage[eandd,final]{neurips_2026}

\usepackage[utf8]{inputenc}
\usepackage[T1]{fontenc}
\usepackage{hyperref}
\usepackage{url}
\usepackage{booktabs}
\usepackage{amsfonts}
\usepackage{amsmath,amssymb,amsthm}
\usepackage{nicefrac}
\usepackage{microtype}
\usepackage{xcolor}
\usepackage{graphicx}
\usepackage{enumitem}
\usepackage{multirow}
\usepackage{makecell}
\usepackage{array}
\usepackage{placeins}
\hypersetup{
  hidelinks,
  pdftitle={SimplexUQ: An Evaluation Framework and Benchmark for Conformal Uncertainty on Simplex-Valued Predictions},
  pdfauthor={Liang You, Hengyu Shi, Dongwen Ou}
}

\newtheorem{theorem}{Theorem}
\newtheorem{proposition}[theorem]{Proposition}
\newtheorem{corollary}[theorem]{Corollary}
\newtheorem{lemma}[theorem]{Lemma}
\theoremstyle{definition}
\newtheorem{assumption}[theorem]{Assumption}
\newtheorem{remark}[theorem]{Remark}

\newcommand{\DeltaK}{\Delta^{K-1}}
\newcommand{\Pbb}{\mathbb{P}}

\newcommand{\cC}{\mathcal{C}}
\newcommand{\cD}{\mathcal{D}}
\newcommand{\cE}{\mathcal{E}}
\newcommand{\cX}{\mathcal{X}}
\newcommand{\cU}{\mathcal{U}}
\newcommand{\iid}{\stackrel{\mathrm{iid}}{\sim}}
\newcommand{\Unif}{\mathrm{Unif}}
\newcommand{\Exp}{\mathrm{Exp}}
\newcommand{\ceil}[1]{\left\lceil #1 \right\rceil}

\title{SimplexUQ: An Evaluation Framework and Benchmark for Conformal Uncertainty on Simplex-Valued Predictions}

\author{%
  Liang You\thanks{Corresponding author.} \\
  University of Pittsburgh \\
  \texttt{liangyou03@pitt.edu}
  \And
  Hengyu Shi \\
  Xiamen University
  \And
  Dongwen Ou \\
  Duke University
}

\begin{document}

\maketitle

\begin{abstract}
Conformal prediction guarantees marginal coverage, but a single calibration threshold can still spread that coverage unevenly, over-covering easy regions and under-covering hard ones. SimplexUQ is, to our knowledge, the first benchmark and reproducible protocol for measuring this allocation problem on simplex-valued predictions; it compares existing conformal wrappers rather than proposing a new one. Its task suite, SimplexTasks-12, combines six controlled synthetic regimes with six frozen-predictor real tasks spanning class probabilities, topic mixtures, spectral abundances, cell-type fractions, age distributions, and emotion mixtures. Each comparison fixes the predictor, score, and response-free stratification map, varies only the wrapper, and reports marginal coverage, worst-stratum coverage, max disparity, and within-task radius and compute. Global calibration can look valid while failing badly: on CIFAR-10 it attains 0.900 marginal coverage but only 0.542 in the worst entropy stratum, and Mondrian calibration raises that stratum to 0.886 while reducing max disparity from 0.358 to 0.022. No wrapper dominates, however. Under smooth synthetic heterogeneity, several repairs are competitive; fixed-map analyses show that rankings depend on the evaluation groups and protocol; and in a 12-task comparison, Mondrian has lower disparity on its single target partition for all 12 tasks, whereas BatchMVP has lower disparity over overlapping groups on five. These are empirical comparisons, not new coverage guarantees. A controlled predictor-bias sweep shows that removing predictor bias only partly reduces global-threshold disparity. We release task cards, result provenance, permitted derived arrays, and rebuild instructions, and treat wrapper selection as a diagnostic comparison rather than a universal ranking.
\end{abstract}

\section{Introduction}
\label{sec:intro}

Many predictors output compositions. Softmax class probabilities, topic proportions, spectral abundances, cell-type fractions, age-label distributions, and emotion mixtures all live on the simplex $\DeltaK = \{y \in [0,1]^K : \sum_k y_k = 1\}$. For these outputs, the practical question is not only whether a predictor is accurate, but whether its uncertainty estimates remain reliable in the hard regions of the simplex. Marginal coverage alone cannot answer this question.

Conformal prediction (CP) guarantees marginal coverage under exchangeability~\cite{vovk2005algorithmic,papadopoulos2002inductive}. On simplex-valued outputs, however, a single global threshold can still distribute that coverage unevenly across prediction space, over-covering easier regions and under-covering harder ones (Proposition~\ref{prop:split-disparity}). Coverage allocation, not marginal coverage alone, should therefore be a primary evaluation target.

Existing wrappers address heterogeneity, but the field lacks a simplex benchmark for choosing among them. The broader CP literature already offers grouped calibration, score normalization, exact or leave-one-out variants, and weighting-based adaptations~\cite{fontana2023conformal,barber2021predictive,tibshirani2019conformal}. What is missing is a benchmark that asks, under fixed simplex-valued predictors: which calibration mechanism works in which regime, what tradeoff it induces, and when no wrapper is an adequate repair.

We present \textbf{SimplexUQ}, an evaluation framework and benchmark for simplex-valued uncertainty. The gap between marginal and subgroup coverage is known and is our motivation, not our novelty claim; to our knowledge, SimplexUQ is the first benchmark that measures it on simplex-valued outputs with explicit allocation diagnostics. Our contributions are:
\begin{itemize}[leftmargin=1.5em,itemsep=1pt,topsep=2pt]
\item \textbf{Benchmark.} SimplexTasks-12: six controlled synthetic regimes and six frozen-predictor real tasks, released with task cards, permitted derived arrays, and rebuild scripts.
\item \textbf{Protocol.} Each comparison fixes the predictor, score, and response-free stratification map, varies only the wrapper, and reports marginal coverage, max disparity, worst-stratum coverage, and within-task radius.
\item \textbf{Theory.} Three propositions isolate the failure modes the benchmark tests: global-threshold misallocation, loss of split validity under same-sample normalization, and marginally valid but misallocated training-residual scaling.
\item \textbf{Findings.} Global CP keeps marginal coverage near 0.90 on all six real tasks, yet its worst stratum falls to 0.542 on CIFAR-10 and 0.410 on UTKFace. Under fixed maps, Mondrian has lower disparity than Global in 23 of 24 task--map cells, but no wrapper leads on every task and protocol, and BatchMVP has lower overlapping-family disparity than Mondrian on five tasks. These results yield a benchmark-derived selection heuristic.
\end{itemize}
All comparisons assume exchangeability and apply the same pre-declared map to calibration and test points; deployment claims still depend on the predictor, data source, and application.

\section{Related Work}
\label{sec:related}

\paragraph{Conformal wrappers for heterogeneous prediction.}
The closest line of work to ours is the family of conformal procedures designed to handle non-uniform difficulty across the input or prediction space. Split conformal prediction~\cite{vovk2005algorithmic,papadopoulos2002inductive} supports the marginal-validity guarantee, while full and jackknife+ variants~\cite{lei2018distribution,barber2021predictive,johnstone2025multioutput} trade compute for tighter or assumption-light alternatives. Heterogeneity-aware extensions take two routes: grouped (Mondrian) calibration partitions the prediction space into strata, and normalization-based methods rescale residuals by a local difficulty estimate. Classification-specific methods adapt set size to example difficulty, calibrate per class, or control false positives~\cite{angelopoulos2021uncertainty,ding2023class,fisch2022limited}. We do not propose a new wrapper; SimplexUQ benchmarks each of these as a wrapper family layered on a fixed simplex predictor, so that allocation behavior, not predictor quality, is what varies across rows.

\paragraph{Simplex-valued outputs and prediction-space allocation.}
Our setting borrows from two adjacent literatures. On the output side, compositional data analysis supplies log-ratio geometry~\cite{aitchison1982statistical}, and recent conformal methods build Dirichlet-based regions for compositional regression~\cite{amaral2025compositional} or credal sets over class distributions~\cite{javanmardi2024credal}; these constructions are typically tied to a single modeling pipeline. On the diagnostic side, conformal fairness work~\cite{vadlamani2025fairness,gao2025bridging,cresswell2025disparate} defines groups by protected attributes and studies coverage, or its downstream impact, across those groups. We answer a different question: we keep the predictor fixed and swap only the conformal wrapper, group test points by where they land on the simplex rather than by demographic attributes, and compare existing methods rather than propose new ones.

\paragraph{Evaluation as the artifact.}
Methodologically, this paper sits in the benchmark and evaluation-infrastructure tradition of WILDS, Dynabench, and HELM~\cite{koh2021wilds,kiela2021dynabench,liang2023holistic}, where the controlled comparison itself is the contribution. Existing conformal benchmarks emphasize scalar regression or classification, and to our knowledge none target simplex-valued outputs with explicit allocation diagnostics. SimplexUQ's contribution in this tradition is a standardized protocol, twelve tasks spanning controlled and real predictors, and a benchmark-derived wrapper-selection heuristic for simplex-valued uncertainty.

\section{Background and Setup}
\label{sec:background}

\paragraph{Setup and scope.}
Let $X \in \cX$ denote input features, $Y \in \DeltaK$ the target composition, and $U=\hat f(X)\in\DeltaK$ the output of a fixed predictor. A nonconformity score $R=d(Y,U)$ induces simplex-valued prediction sets. We use Aitchison distance for continuous compositional targets and total variation for classification-style tasks such as CIFAR-10, whose target is simplex-valued but not a natural continuous composition.

\paragraph{Split CP is the baseline.}
Given calibration residuals $\{R_i\}_{i=1}^{n_{\mathrm{cal}}}$, split conformal~\cite{vovk2005algorithmic,papadopoulos2002inductive} prediction constructs a threshold $\hat{q}$ from the empirical $(1-\alpha)$ quantile and predicts
\[
\cC_{\mathrm{split}}(X_{n+1}) = \{y \in \DeltaK : d(y, U_{n+1}) \le \hat{q}\}.
\]
Under exchangeability, this set attains marginal coverage at level $1-\alpha$. The key limitation is that the same threshold is applied to every prediction, even when the conditional distribution of $R \mid U$ varies sharply across the simplex.

We therefore focus on coverage allocation and wrapper selection. Score design, set geometry, and high-dimensional simplex geometry remain important, but they are held fixed here and lie outside the scope of the comparison.

\section{Strategy Taxonomy and Diagnostic Foundations}
\label{sec:taxonomy}

We compare wrappers by how they calibrate thresholds. Global methods use one threshold for all points. Group-wise methods replace that threshold with one per stratum. Normalization-based methods estimate a local scale before calibration. Approximate normalization and weighting serve as diagnostic cases: they show what happens when the split-conformal argument is broken or importance weights are unstable. This taxonomy is more informative than a flat leaderboard because each family targets a different heterogeneity pattern and fails for a different reason. Table~\ref{tab:wrappers} lists the wrappers used throughout the paper and the result that covers each one.

\begin{table}[t]
\centering
\small
\setlength{\tabcolsep}{3pt}
\caption{Wrappers compared in SimplexUQ. All operate on the same frozen predictor output $U$ and score $R=d(Y,U)$; they differ only in how the threshold is calibrated. Formal statements are in Appendix~A.}
\label{tab:wrappers}
\footnotesize
\begin{tabular}{@{}>{\raggedright\arraybackslash}p{0.14\linewidth}>{\raggedright\arraybackslash}p{0.45\linewidth}>{\raggedright\arraybackslash}p{0.38\linewidth}@{}}
\toprule
\textbf{Wrapper} & \textbf{Threshold rule} & \textbf{Guarantee or role} \\
\midrule
Global & One split-conformal threshold from all calibration scores & Marginal coverage (Theorem~\ref{thm:split}) \\
Mondrian & One threshold per stratum of the fixed map $g$ & Coverage within each stratum (Theorem~\ref{thm:mondrian}) \\
TwoStage & Local scale $\hat\sigma$ fitted on separate data; calibrate $R/\hat\sigma(U)$ & Marginal coverage (Theorem~\ref{thm:two-stage}); primary normalization wrapper \\
OneShot & kNN scale fitted on the same calibration residuals & Not guaranteed (Proposition~\ref{prop:oneshot}); diagnostic \\
TrainRes & Scale fitted on training residuals, then held fixed & Marginal, but can misallocate (Proposition~\ref{prop:trainres}); diagnostic \\
FullCP & Full-conformal local-scale variant & Exact reference; highest compute \\
Jackknife+ & Leave-one-out variant~\cite{barber2021predictive} & Leave-one-out reference \\
Weighted & Inverse local-scale weights, not density-ratio weights & Diagnostic failure case \\
BatchMVP & Multivalid patching over overlapping groups~\cite{jung2023batch} & No new guarantee claimed for our adaptation (Appendix~\ref{app:additional-multivalid}) \\
\bottomrule
\end{tabular}
\end{table}

\subsection{Three failure modes and main propositions}

Three propositions pin down the failure modes the benchmark targets; proofs and the standard split-validity results are in Appendix~A.

\begin{proposition}[Population-level global calibration under heterogeneous residuals]
\label{prop:split-disparity}
Fix $\alpha\in(0,1)$, $p\in(0,1)$ and $\lambda>1$ with $\alpha>p(1-1/\lambda)$. Consider an iid score model with binary prediction-space stratum $G\in\{A,B\}$ and
\[
\Pbb(G=B)=p,\qquad
R\mid G=A\sim \mathrm{Unif}[0,1],\qquad
R\mid G=B\sim \mathrm{Unif}[0,\lambda].
\]
Then the pooled population quantile $q_\star=(1-\alpha)/(1-p+p/\lambda)<1$ satisfies
\[
\Pbb(R\le q_\star)=1-\alpha
\]
while the stratum-wise coverages do not:
\[
\Pbb(R\le q_\star\mid G=A)>1-\alpha,
\qquad
\Pbb(R\le q_\star\mid G=B)<1-\alpha.
\]
Hence marginal calibration of a single global threshold does not imply simultaneous calibration in all prediction-space strata.
\end{proposition}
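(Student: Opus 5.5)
The argument is a direct computation with the pooled CDF, so I would make the mixture distribution function explicit and check that $q_\star$ falls in the region where it is linear. For $t\in[0,1]$ we have $\Pbb(R\le t\mid G=A)=t$ and $\Pbb(R\le t\mid G=B)=t/\lambda$. Hence
\[
F(t)=\Pbb(R\le t)=(1-p)t+pt/\lambda=t\,(1-p+p/\lambda)
\]
on this interval.

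The first step is to show that $q_\star<1$, since this is the only place the hypothesis $\alpha>p(1-1/\lambda)$ is used. The condition $q_\star<1$ is equivalent to $1-\alpha<1-p+p/\lambda$, that is, $\alpha>p-p/\lambda=p(1-1/\lambda)$, which is exactly the assumption. Note also that $\lambda>1$ gives $1-p+p/\lambda\in(0,1)$, so $q_\star>0$. With $q_\star\in(0,1)$ established, the linear formula applies and gives $F(q_\star)=1-\alpha$. This identifies $q_\star$ as the pooled population $(1-\alpha)$-quantile. It is unique because $F$ is continuous and strictly increasing on $[0,\lambda]$.

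Second, I would compute the stratum-wise coverages directly. In stratum $A$ the coverage is $q_\star=(1-\alpha)/(1-p+p/\lambda)$, which exceeds $1-\alpha$ because the denominator lies strictly in $(0,1)$; this uses $p>0$ and $\lambda>1$. In stratum $B$ the coverage is $q_\star/\lambda$. Rather than bounding this term by term, I would use the mixture identity
\[
(1-p)\,\Pbb(R\le q_\star\mid A)+p\,\Pbb(R\le q_\star\mid B)=1-\alpha.
\]
Since $p\in(0,1)$ and the $A$-term is strictly above $1-\alpha$, the $B$-term must be strictly below $1-\alpha$. One could instead check this explicitly: $q_\star/\lambda<1-\alpha$ is equivalent to $1<\lambda(1-p)+p$, which holds because $\lambda>1$ and $p<1$.

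The final sentence of the proposition then follows immediately: the single threshold is marginally exact but miscalibrated in both strata. The only real obstacle is making sure $q_\star<1$, because otherwise the linear form of the CDF fails; that is precisely the role of the stated inequality on $\alpha$. Everything else is one-line algebra.
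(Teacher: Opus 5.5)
Your proposal is correct and follows essentially the same route as the paper's proof: the linear pooled CDF on $[0,1]$, the equivalence $q_\star<1 \iff \alpha>p(1-1/\lambda)$, and the stratum-$A$ bound from $1-p+p/\lambda<1$. The only cosmetic difference is that you get the stratum-$B$ inequality mainly from the mixture identity (essentially the paper's Corollary on mixture validity), while the paper verifies $\lambda(1-p+p/\lambda)=p+\lambda(1-p)>1$ directly, which you also give as your alternative check.
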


\begin{proposition}[Non-leave-one-out same-sample kNN normalization need not inherit split validity]
\label{prop:oneshot}
For every $n\ge 2$ and every $\alpha$ satisfying
\[
\frac{1}{n+1}\le \alpha < \frac{1}{2},
\]
there exists an iid calibration/test model and a one-shot local scale estimator $\hat\sigma_n$ fitted on the calibration sample. The same calibration residuals are used both to estimate local scales and to calibrate the empirical quantile. In this construction, all normalized calibration scores are identically equal to $1$, whereas the normalized test score has a different distribution and the resulting conformal set has coverage exactly
\[
\frac{1}{2}<1-\alpha.
\]
Therefore same-sample local-scale fitting and quantile calibration do not, by themselves, preserve the finite-sample split-conformal guarantee.
\end{proposition}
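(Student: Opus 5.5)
The plan is an explicit construction in which the one-shot scale estimator divides each calibration residual by itself. The normalized calibration scores are then degenerate at $1$, while the test point, which did not contribute to the fit, is normalized by an unrelated scale.

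\textbf{Model and estimator.} Take a trivial feature/prediction space in which $U$ carries no information, and let the scores $R_1,\dots,R_n,R_{n+1}$ be iid $\mathrm{Unif}[0,1]$. Define $\hat\sigma_n$ from the calibration sample as a piecewise-constant kNN-type local scale with $k=1$ and no leave-one-out. Each calibration point is its own nearest neighbour, so it receives scale $\hat\sigma_n(U_i)=R_i$. To make this literal, I would give the points distinct locations $U_i$ (for instance continuous iid $U_i$ independent of $R_i$) and use a nearest-neighbour rule that includes the point itself, so $\hat\sigma_n(U_i)=R_i$ almost surely.

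\textbf{Calibration and test scores.} The calibration scores are $R_i/\hat\sigma_n(U_i)=1$ for every $i$. Since $\alpha\ge 1/(n+1)$, the index $\lceil (n+1)(1-\alpha)\rceil$ is at most $n$, so the conformal quantile $\hat q$ is finite and equals $1$. The test point receives scale $\hat\sigma_n(U_{n+1})=R_{j}$, where $j$ is the calibration point nearest to $U_{n+1}$. This $R_j$ is independent of $R_{n+1}$ and is distributed $\mathrm{Unif}[0,1]$.

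\textbf{Coverage.} The resulting set covers exactly when $R_{n+1}/R_j\le 1$, that is, when $R_{n+1}\le R_j$. By symmetry of two iid continuous variables, this event has probability exactly $1/2$. Because $\alpha<1/2$, we have $1/2<1-\alpha$. The normalized test score $R_{n+1}/R_j$ is nondegenerate, in contrast with the calibration scores, which are all equal to $1$.

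\textbf{Main obstacle.} The hard part is showing that the dependence of the test scale $R_j$ on the calibration data leaves it exactly independent of $R_{n+1}$ and exactly uniform. The index $j$ is selected by $U_{n+1}$ and the $U_i$, and all of these are independent of the $R$'s. Conditioning on all the $U$'s therefore fixes $j$, and $R_j$ is then uniform and independent of $R_{n+1}$. The rest is bookkeeping: confirm the quantile convention (the $\lceil (n+1)(1-\alpha)\rceil$-th smallest score) so that $\hat q=1$ whenever $\alpha\ge 1/(n+1)$, and treat ties as a null event.
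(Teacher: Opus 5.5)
Your construction is correct and is essentially the paper's proof. The paper also uses a self-inclusive $1$-nearest-neighbour scale $\hat\sigma_n(u)=R_{J(u)}$ with continuous $U_i$ independent of the residuals, so every calibration score equals $1$, the threshold is $1$ because $\lceil (n+1)(1-\alpha)\rceil\le n$, and conditioning on the $U$'s gives $\Pbb(R_{n+1}\le R_J)=1/2$. The only differences are immaterial: you take $R_i\sim\mathrm{Unif}[0,1]$ where the paper uses $\mathrm{Exp}(1)$, but any almost surely positive continuous residual law works; and your initial ``trivial $U$'' idea is rightly replaced by distinct continuous locations, since constant $U$ would make the nearest-neighbour rule all ties.
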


\begin{proposition}[Independent training-residual normalization is marginally valid but can misallocate]
\label{prop:trainres}
Let $\hat\sigma_{\mathrm{tr}}:\Delta^{K-1}\to(0,\infty)$ be a positive scale function fitted on training data independent of the calibration/test split and then held fixed during calibration. If calibration and test pairs are exchangeable conditional on that training data, then split conformal applied to the normalized score
\[
T=\frac{R}{\hat\sigma_{\mathrm{tr}}(U)}
\]
retains the usual marginal coverage guarantee. Nevertheless, for every $0<p<\alpha<1$ and every sufficiently small $\varepsilon\in(0,1)$, there exists a two-stratum problem with $\Pbb(G=B)=p$ and a fixed training-based scale field. In that problem, the pooled normalized rule is calibrated at level $1-\alpha$, but the hard-stratum coverage equals
\[
\frac{\varepsilon(1-\alpha)}
{2\left((1-p)+p\varepsilon/2\right)},
\]
which tends to $0$ as $\varepsilon\downarrow 0$. Therefore the failure mode is not loss of pooled marginal validity, but predictor-sensitive allocation collapse in hard regions.
\end{proposition}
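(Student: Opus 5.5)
The plan has two parts. The marginal-validity claim reduces to the standard split-conformal result, and the allocation-collapse claim follows from an explicit two-stratum construction in the style of Proposition~\ref{prop:split-disparity}, where the pooled quantile is computed at the population level.

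\emph{Marginal validity.} I would condition on the training data $\mathcal{D}_{\mathrm{tr}}$. Given $\mathcal{D}_{\mathrm{tr}}$, the map $(X,Y)\mapsto T=d(Y,\hat f(X))/\hat\sigma_{\mathrm{tr}}(\hat f(X))$ is a fixed, deterministic score function, because $\hat f$ and $\hat\sigma_{\mathrm{tr}}$ are both frozen. Positivity of $\hat\sigma_{\mathrm{tr}}$ makes $T$ well defined. By hypothesis the calibration and test pairs are exchangeable conditional on $\mathcal{D}_{\mathrm{tr}}$, so the scores $T_1,\dots,T_{n_{\mathrm{cal}}},T_{n+1}$ are exchangeable as well. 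Theorem~\ref{thm:split} then gives $\Pbb(T_{n+1}\le \hat q\mid \mathcal{D}_{\mathrm{tr}})\ge 1-\alpha$. Taking the expectation over $\mathcal{D}_{\mathrm{tr}}$ yields the unconditional guarantee. Finally, $\{T_{n+1}\le\hat q\}=\{d(Y,U)\le \hat q\,\hat\sigma_{\mathrm{tr}}(U)\}$, so this is coverage of the normalized set.

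\emph{Construction.} I would let the predictor output take two values, $U\in\{u_A,u_B\}\subset\DeltaK$, which identify the strata, with $\Pbb(G=B)=p$. The residual laws are $R\mid A\sim\Unif[0,1]$ and $R\mid B\sim\Unif[0,2]$. The fixed training-based scale field is $\hat\sigma_{\mathrm{tr}}(u_A)=1$ and $\hat\sigma_{\mathrm{tr}}(u_B)=\varepsilon$; this models training residuals that badly underestimate the hard-region scale. Then $T\mid A\sim\Unif[0,1]$ and $T\mid B\sim\Unif[0,2/\varepsilon]$. For $q\le 1\le 2/\varepsilon$, the pooled CDF is
\[
q\bigl((1-p)+p\varepsilon/2\bigr).
\]
Setting it equal to $1-\alpha$ gives
\[
q_\star=\frac{1-\alpha}{(1-p)+p\varepsilon/2}.
\]
The hard-stratum coverage is then $\Pbb(T\le q_\star\mid B)=q_\star\varepsilon/2$, which is exactly the displayed expression. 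That expression is at most $\varepsilon/2\to0$.

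\emph{Main obstacle.} The only delicate step is checking that $q_\star$ lies in the linear regime $q_\star\le 1$, so that the CDF formula is valid. This requires $1-\alpha\le 1-p+p\varepsilon/2$, that is, $\alpha\ge p(1-\varepsilon/2)$. That inequality holds for every $\varepsilon\in(0,1)$ because $p<\alpha$, and "sufficiently small" covers any remaining edge cases such as $2/\varepsilon\ge1$. I would state the calibration at the population quantile, as in Proposition~\ref{prop:split-disparity}. I would also remark that the finite-sample split rule approaches the same allocation as $n_{\mathrm{cal}}\to\infty$ while remaining marginally valid by the first part.
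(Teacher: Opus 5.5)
Your proposal is correct and follows the paper's proof essentially step for step. You condition on the training data so that the normalized scores are exchangeable and then invoke the rank lemma (Lemma~\ref{lem:rank}, to which Theorem~\ref{thm:split} reduces). You then use the identical population-level construction with $R\mid A\sim\Unif[0,1]$, $R\mid B\sim\Unif[0,2]$, $\hat\sigma_{\mathrm{tr}}=1$ on $A$ and $\varepsilon$ on $B$, together with the same check that $p<\alpha$ forces $q_\star<1$, which in fact makes the argument valid for every $\varepsilon\in(0,1)$, as you note.
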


In short, marginal calibration does not control allocation (Proposition~\ref{prop:split-disparity}), same-sample kNN normalization can lose split validity (Proposition~\ref{prop:oneshot}), and training-residual normalization can remain marginally valid while misallocating (Proposition~\ref{prop:trainres}).

\section{Evaluation Protocol}
\label{sec:protocol}

\paragraph{Stratification.}
To measure coverage allocation, we evaluate empirical coverage within prediction-space strata. Default strata are entropy- and boundary-based, with task-specific partitions when a natural grouping exists. Both the default rules and the appendix sensitivity sweeps are fixed before inspecting wrapper outcomes. Let $g:\DeltaK\to\{1,\dots,m\}$ denote the fixed, response-free stratification map (entropy bins, boundary bins, dominant component, or KMeans on $U$). For test indices $\mathcal T_s=\{i:g(U_i)=s\}$, let
\[
\hat c_s=\frac{1}{|\mathcal T_s|}\sum_{i\in\mathcal T_s}\mathbf{1}\{Y_i\in \hat C(X_i)\},
\qquad
\hat c=\frac{1}{n_{\mathrm{test}}}\sum_{i=1}^{n_{\mathrm{test}}}\mathbf{1}\{Y_i\in \hat C(X_i)\}.
\]
\paragraph{Metrics.} We report four metrics in the main text: marginal coverage $\hat c$ as a validity check, max disparity $\max_s|\hat c_s-(1-\alpha)|$ as the primary allocation diagnostic, worst-stratum coverage $\min_s \hat c_s$ as the failure floor, and mean radius $\bar r=n_{\mathrm{test}}^{-1}\sum_i r_i$ as an efficiency proxy. A wrapper is called \emph{valid} when its mean marginal coverage is at least $1-\alpha-\tau$ with the pre-declared tolerance $\tau=0.02$. Coverage variance, SSCV (size-stratified coverage violation), and the synthetic-only Monte Carlo simplex-volume ratio are reported in the appendix.

\paragraph{Two experiment series.} Results come from two series. The \emph{archived runs} compare all wrappers under task-specific configurations. Some of these configurations predate an explicit fixed-map flag, and a few cells come from auxiliary runs at a different scale or partition (marked $\dagger$). The \emph{fixed-map series} reruns Global, Mondrian, and BatchMVP on the released frozen arrays: each response-free map $g$ is computed once from the predictor outputs before splitting (synthetic draws receive their own map) and is then applied to both calibration and test points (Appendix~\ref{app:additional-evaluation}). We use the archived runs to describe all wrappers and the fixed-map series to check robustness, so we do not attribute every archived ranking solely to the wrapper. Computing $g$ from all predictor outputs is transductive and does not by itself establish the iid assumption of Theorem~\ref{thm:mondrian}; prospective use can fit maps on independent reference data.

\paragraph{Interpreting radius and validity.} Radius measures radial extent in the selected score, not simplex volume. At fixed predictor output and score, increasing the radius gives nested sets. Radius comparisons are therefore meaningful only within a task, where predictor, score, and simplex dimension are shared, and a smaller radius counts as an efficiency gain only when coverage and allocation are comparable. The empirical coverage tolerance above is a one-sided screening rule, not a finite-sample validity guarantee. Terminology and the scope of supplementary comparisons are detailed in Appendix~\ref{app:additional-evaluation}.

\begin{figure*}[t]
\centering
\includegraphics[width=0.96\textwidth]{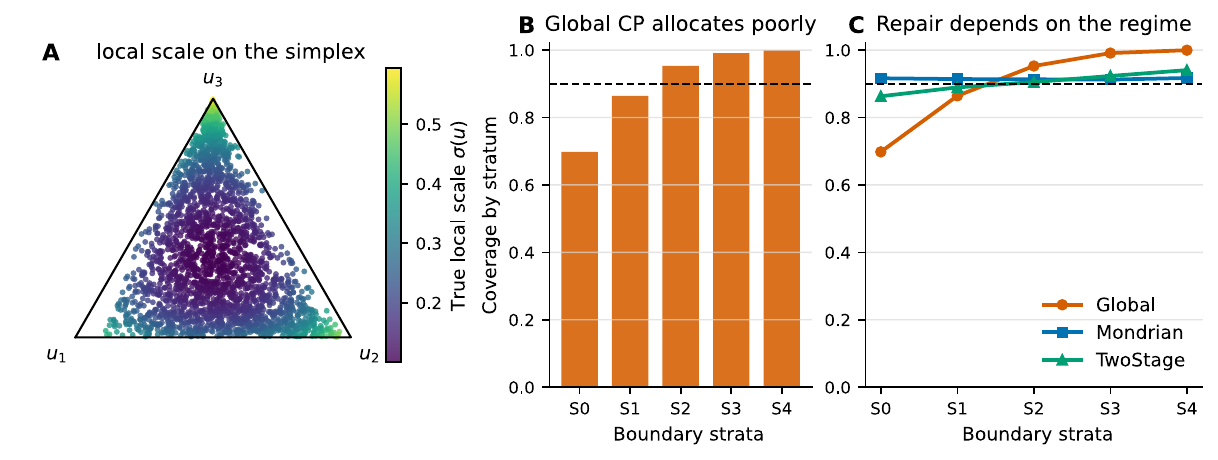}
\caption{Allocation failure on the D2 smooth-scale synthetic task. (A) The true local residual scale is highest near simplex vertices and lowest toward the interior. (B) Under the pre-declared boundary stratification, a single global split-conformal threshold under-covers boundary-adjacent strata despite near-nominal pooled coverage. (C) Both Mondrian and TwoStage improve allocation; this profile does not establish unique superiority of normalization.}
\label{fig:allocation-geometry}
\end{figure*}

\section{Synthetic Validation}
\label{sec:synthetic}

\paragraph{Synthetic regimes isolate heterogeneity structure.}
D1 is a homogeneous negative control. D2 and D5 isolate smooth scale heterogeneity, with D5 adding heavy tails. D3 isolates aligned discrete groups. D4 is a bias-type counterexample that is not reducible to a single local scale field. D6 raises simplex dimension to expose high-$K$ failure. Each regime isolates one mechanism, so when a wrapper succeeds or fails on a real task, the synthetic suite suggests a mechanistic explanation rather than a post-hoc label. Full synthetic specifications and sample sizes are in Appendix~\ref{app:synthetic-dgp}.

\paragraph{What the synthetic suite establishes.}
Figure~\ref{fig:allocation-geometry} illustrates heterogeneous coverage allocation. In Figure~\ref{fig:synthetic-heatmap}, D1 behaves as a control, both normalization and grouped calibration improve on Global in D2/D5, aligned D3 favors grouped calibration, and D4 remains only partially repairable. Paired intervals in Appendix~\ref{app:additional-synthetic-ci} do not support a uniquely superior normalization family: in D2, Mondrian outperforms TwoStage, while its difference from FullCP is not resolved by the interval. D6 has large global disparity; its daggered FullCP and Jackknife+ entries use a smaller exact-reference run and are not direct main-scale comparisons.

\begin{figure*}[t]
\centering
\includegraphics[width=0.9\textwidth]{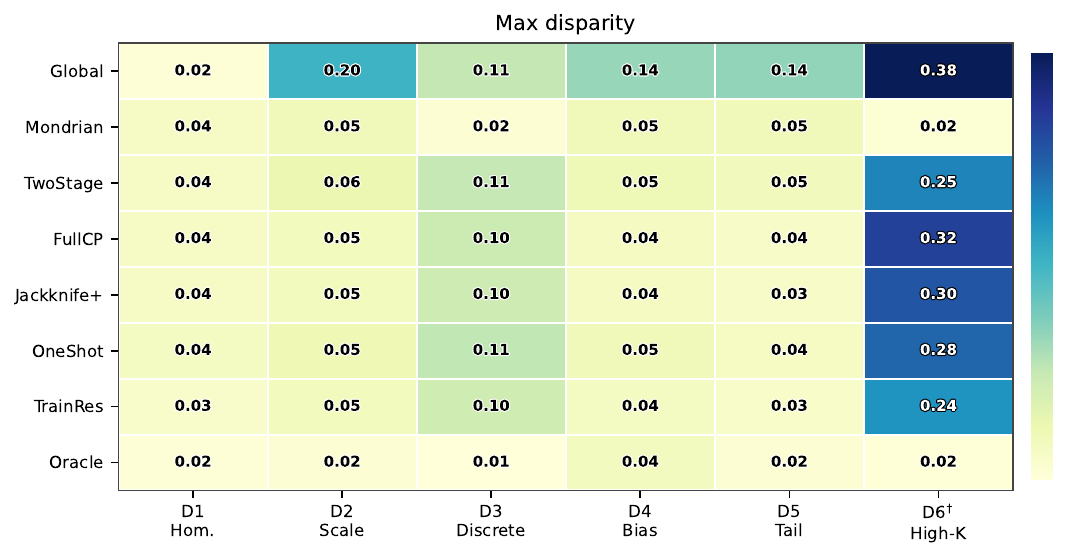}
\caption{Synthetic benchmark summary. Each cell reports max disparity. D2/D5 support several competitive repairs, aligned D3 favors group-wise calibration, and D6 shows large global disparity. For D6$^{\dagger}$, FullCP and Jackknife+ use a smaller exact-reference run. Paired comparisons are provided in Appendix~\ref{app:additional-synthetic-ci}.}
\label{fig:synthetic-heatmap}
\end{figure*}

\section{Real Benchmark Suite}
\label{sec:benchmark}

\paragraph{Benchmark design.}
The real-data suite spans six tasks across vision, NLP, remote sensing, genomics, and demographics. Each task fixes a realistic simplex-valued predictor and varies only the conformal wrapper, which isolates calibration behavior from predictor development.

\begin{table}[t]
\centering
\setlength{\tabcolsep}{3pt}
\caption{Real tasks in SimplexUQ.}
\label{tab:realtasks}
\begin{tabular}{@{}p{0.25\linewidth}p{0.14\linewidth}p{0.06\linewidth}p{0.12\linewidth}p{0.31\linewidth}@{}}
\toprule
\textbf{Task} & \textbf{Domain} & \textbf{$K$} & \textbf{$n$} & \textbf{Design hypothesis} \\
\midrule
CIFAR-10 softmax & Vision & 10 & 10000 & Classification-style grouped stress test \\
20 Newsgroups topics & NLP & 10 & 7539 & Mild / smooth heterogeneity \\
SemEval AffectiveText & NLP/Affect & 6 & 1000 & Small-sample weak structure \\
Samson NMF unmixing & Remote sensing & 3 & 9025 & Aligned grouped heterogeneity \\
PBMC deconvolution & Genomics & 8 & 5000 & Semi-synthetic sensitivity case \\
UTKFace age LDL & Demo-\newline graphics & 10 & 23687 & Strong grouped heterogeneity \\
\bottomrule
\end{tabular}
\end{table}

\paragraph{Task roles and artifact contract.}
CIFAR-10, Samson, and UTKFace provide grouped-repair examples; Topics and AffectiveText test the limits of that pattern; PBMC supplies a semi-synthetic sensitivity case. The tasks derive from CIFAR-10~\cite{krizhevsky2009learning}, 20 Newsgroups~\cite{lang1995newsweeder}, AffectiveText~\cite{strapparava2007semeval}, Samson~\cite{zhu2017spectral}, PBMC3K~\cite{zheng2017massively}, and UTKFace~\cite{zhang2017age}. With the synthetic suite they form \textbf{SimplexTasks-12}. The artifact includes task cards, permitted derived $(Y,U)$ arrays, and table/figure scripts; restricted assets instead have rebuild instructions and documented limitations.

\paragraph{Uncertainty reporting and compute.}
All real tasks use fixed cached predictors; the conformal benchmark is then repeated over random calibration/test splits. Most real tasks use 50 random splits, while AffectiveText and PBMC use 200 because the former is smaller and the latter is part of a semi-synthetic deconvolution stress test. Figure~\ref{fig:real-tradeoff} shows $\pm 1$ standard deviation. Evaluation is CPU-only once predictor outputs are cached (Appendix Table~\ref{tab:compute}); FullCP is the dominant cost, at about 3.25 seconds per repetition on full-scale CIFAR-10.

\section{Benchmark Results}
\label{sec:results}

\begin{figure*}[t]
\centering
\includegraphics[width=0.9\textwidth]{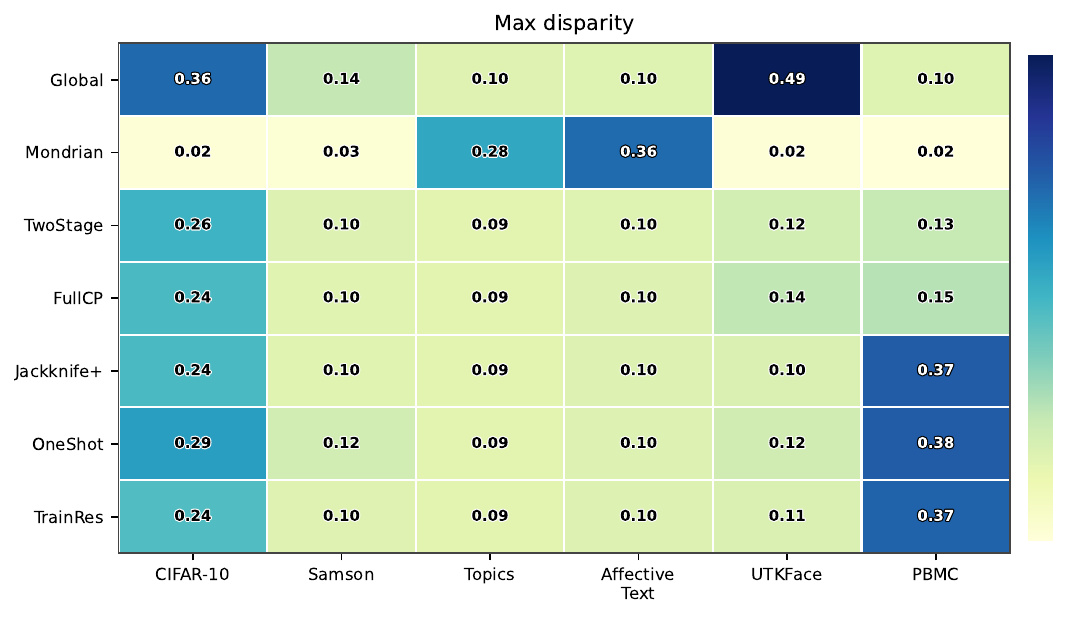}
\caption{Archived task-specific disparity results. CIFAR-10, Samson, and UTKFace favor grouped repair in these runs; Topics and AffectiveText differ under their archived protocols. Reference-only cells: UTKFace FullCP uses 2,000 samples; PBMC Jackknife+, OneShot, and TrainRes use five-stratum auxiliary runs, while its other displayed methods use two-stratum main runs. Those PBMC cells do not support cross-protocol rankings. Fixed-map robustness is reported separately in Appendix~\ref{app:additional-maps}.}
\label{fig:real-heatmap}
\end{figure*}

\begin{figure}[t]
\centering
\includegraphics[width=\linewidth]{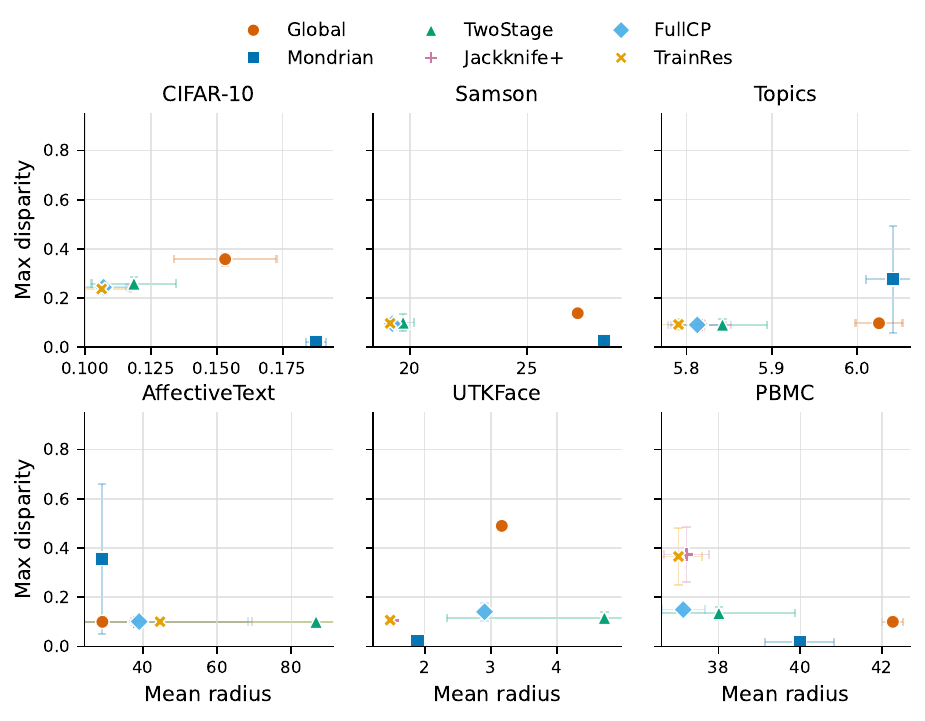}
\caption{Within-task disparity versus mean score radius from the same sources as the summary tables; bars show $\pm1$ standard deviation. Radius measures radial extent, not volume. Smaller values are preferable only alongside acceptable coverage and comparable protocols. UTKFace FullCP and PBMC auxiliary-method points retain the reference-run limitations in Figure~\ref{fig:real-heatmap}; axes are not comparable across tasks.}
\label{fig:real-tradeoff}
\end{figure}

\paragraph{Finding 1: global CP disparity is widespread on simplex tasks.}
On all six real tasks, global split CP attains 0.899--0.904 marginal coverage, yet its max disparity is at least 0.098 on every task and reaches 0.358 on CIFAR-10 and 0.490 on UTKFace, where the worst stratum has coverage 0.410 (Appendix Tables~\ref{tab:real-summary-1}--\ref{tab:real-summary-2}). The homogeneous control D1 is the expected exception. Allocation is therefore a primary evaluation target.

\paragraph{Finding 2: grouped repair is strong for a fixed partition, but no wrapper dominates.}
In the archived runs (Figures~\ref{fig:real-heatmap} and~\ref{fig:real-tradeoff}), Mondrian reduces max disparity from 0.358 to 0.022 on CIFAR-10, from 0.138 to 0.025 on Samson, and from 0.490 to 0.021 on UTKFace. On Topics and AffectiveText, however, it has the highest disparity among non-weighted wrappers (0.276 and 0.356), while normalization and leave-one-out variants are competitive on Topics. These runs do not separate heterogeneity structure from protocol effects.

The fixed-map series is more favorable to grouping. Mondrian has lower mean disparity than Global in 23 of 24 task--map cells, and 21 paired intervals exclude zero; the exception is Topics with entropy bins, which favors Global (Appendix Table~\ref{tab:additional-map-sweep}). An earlier all-wrapper analysis also ranks Mondrian first on five of six tasks under four maps, with AffectiveText mixed, but disagrees on Topics with entropy bins (Table~\ref{tab:real-strata-sensitivity}). Because Mondrian calibrates the same partition on which it is evaluated, its advantage is relative to the chosen map, and no single wrapper leads across all tasks and protocols. Its radius cost is also task-dependent: Mondrian's mean radius is 1.23 times Global's on CIFAR-10 but 0.60 times on UTKFace (Table~\ref{tab:radius-ratios}).

\paragraph{Finding 3: controls and counterexamples matter.}
D1 shows that adaptation is not always necessary. PBMC is a semi-synthetic sensitivity case rather than a uniformly homogeneous control: Mondrian improves allocation in its main run and across the four fixed maps, while the archived mixture-generation runs show different rankings (Appendix Figure~\ref{fig:appendix-pbmc-sensitivity}). Because those runs also differ in realized strata, they do not isolate a concentration-only effect. D4 and AffectiveText further show why task labels alone cannot determine the preferred wrapper. Consistent with Proposition~\ref{prop:oneshot}, OneShot has mean marginal coverage below the 0.90 target on all six real tasks (0.882--0.892). The inverse-scale Weighted rule fails the 0.88 coverage screen on every real task (0.130--0.770), so we treat it as a diagnostic failure case, not as a weighted-CP baseline.

\paragraph{Finding 4: a single partition and overlapping groups favor different methods.}
In the fixed-map series, we compare BatchMVP~\cite{jung2023batch}, Global, and Mondrian on all 12 tasks with paired splits. BatchMVP reduces default-partition disparity relative to Global on 10 tasks and overlapping-family disparity on 11. It has lower overlapping-family disparity than Mondrian on five tasks, whereas Mondrian has lower default-partition disparity on all 12. Most of the corresponding paired intervals exclude zero, although they are not multiplicity-adjusted (Appendix Table~\ref{tab:batch-ci}). The two methods therefore serve different targets: Mondrian a single partition, BatchMVP overlapping groups. BatchMVP passes the 0.88 coverage screen on every task but over-covers on CIFAR-10 (0.953) and Samson (0.962). Appendix~\ref{app:additional-multivalid} reports the full comparison, implementation limits, and paired intervals.

\section{Strategy-Selection Framework}
\label{sec:selection}

\paragraph{How to use the table.}
Table~\ref{tab:selection-rule} summarizes a heuristic derived from our benchmark results, not a validated optimal selection rule. It gives a recommendation only after a task's diagnostics have been computed under the pre-declared map. Coarse aligned heterogeneity points to grouped calibration; smooth heterogeneity calls for comparing normalization with grouping rather than choosing either automatically; overlapping target groups call for a multivalid comparison. Choose candidates on development data and confirm the final choice on held-out data. When no wrapper meets the application's allocation and radius criteria, the table gives no recommendation, and the next step is to investigate the predictor or score. The controlled predictor-bias experiment in Appendix~\ref{app:additional-predictor} supports this step but does not guarantee that predictor improvement repairs allocation. Recommendations are tied to the frozen predictor: after a material predictor change, the response-free diagnostics must be rerun.

\begin{table*}[!htbp]
\centering
\setlength{\tabcolsep}{3pt}
\caption{Benchmark-derived candidate-selection heuristic, to be applied after a task's diagnostics are computed. Recommendations are starting points for held-out comparison, not guarantees or a validated universal selection rule, and must be re-derived after a material predictor change.}
\label{tab:selection-rule}
\begin{tabular}{@{}>{\raggedright\arraybackslash}p{0.22\linewidth}>{\raggedright\arraybackslash}p{0.27\linewidth}>{\raggedright\arraybackslash}p{0.22\linewidth}>{\raggedright\arraybackslash}p{0.25\linewidth}@{}}
\toprule
\textbf{Observed pattern} & \textbf{Diagnostic evidence} & \textbf{Prefer} & \textbf{Main caveat} \\
\midrule
Near-homogeneous regime & Low max disparity and acceptable worst-stratum coverage & Global split CP & Extra adaptation can add variance or compute without improving allocation \\
Coarse aligned heterogeneity & Hard/easy regions line up with entropy bins, dominant class, or other meaningful groups & Group-wise / Mondrian CP & Fragmented or weakly aligned groups can become unstable in small samples \\
Smooth heterogeneity & Coverage changes gradually across entropy or boundary proximity & Compare two-stage normalization with Mondrian; exact variants if affordable & Naive OneShot normalization is not guaranteed, and exactness can be expensive \\
Bias-type or predictor-driven failure & High disparity remains after both grouping and normalization & No wrapper recommendation; investigate the predictor or score & Wrapper choice alone may not repair structural misspecification \\
Small-sample weak-structure setting & Few calibration points per group and similar floors across valid methods & Conservative global or smooth normalization strategies & Mondrian can overfit the calibration split and hurt worst-stratum behavior \\
\bottomrule
\end{tabular}
\end{table*}
\FloatBarrier

\section{Limitations, Future Work, and Conclusion}
\label{sec:limitations}
Results depend on predictor quality, score geometry, and the evaluation map. Scores are fixed within each task, not shared across tasks. Mondrian benefits from calibrating the partition on which it is evaluated; the overlapping-family analysis and map sweeps examine but do not remove this alignment. Some archived configurations lack explicit map provenance, and PBMC auxiliary and UTKFace exact-reference cells are not like-for-like main-run comparisons. We do not interpret their differences as pure method effects. Predictor outputs can also distort the diagnostic groups: low entropy does not establish accuracy, and the controlled bias sweep is not evidence that real-model retraining necessarily repairs coverage. A recommendation derived for one frozen predictor should therefore not be transferred to a materially different predictor without rerunning the response-free diagnostics. The theory identifies failure modes, not an optimal selection rule. Exact-method reference runs and the BatchMVP grid further limit quantitative generalization.

SimplexTasks-12 is built to be extended: (i) a \emph{score-function sweep} testing whether allocation patterns are score-invariant; (ii) \emph{shift-aware variants} of the real tasks with a likelihood-ratio weighted CP baseline instead of the diagnostic inverse-scale rule; (iii) \emph{high-$K$ scaling}, with language-model-scale tasks and exact or jackknife+ approximations beyond $K\!\sim\!50$; (iv) theory for \emph{when} each failure mode dominates, such as when normalization provably beats grouping; and (v) SimplexUQ as a training-time feedback signal for predictors. The released arrays, task cards, and rebuild scripts make these extensions inexpensive to attempt.

\paragraph{Conclusion.} Marginal coverage is not enough to evaluate conformal uncertainty on simplex-valued outputs: global CP stays near nominal coverage while its worst stratum can fall to 0.410. Which repair works depends on the heterogeneity pattern and the evaluation groups, so wrapper choice should rest on held-out diagnostic comparison, which SimplexTasks-12 makes reproducible.

\section{Funding and Competing Interests.}
This research received no external funding. The authors declare no competing interests.

\bibliographystyle{unsrtnat}
\FloatBarrier
\bibliography{refs}

\clearpage
\appendix

\section{Code and Data Availability}
\label{sec:availability}

The SimplexUQ code is available at
\url{https://github.com/liangyou03/SimplexUQ}.
The repository includes benchmark runners, experiment configurations,
tests, and figure-generation scripts.

The SimplexTasks-12 dataset artifact is hosted at
\url{https://huggingface.co/datasets/liangyou03/SimplexTasks-12-data}.
It contains task cards, frozen synthetic and derived arrays,
evaluation metadata, and archived result summaries.
Source-specific usage and redistribution restrictions are documented
in the artifact; restricted raw assets are not mirrored.

\section{Theory and Additional Experimental Details}

\subsection{Appendix roadmap}

\paragraph{Contents.}
\begin{itemize}[leftmargin=1.5em,itemsep=2pt]
\item Theory: proof conventions, standard split-validity results, proofs of Propositions~\ref{prop:split-disparity}--\ref{prop:trainres}, and an approximation result for local normalization.
\item Data: synthetic data-generating processes (Appendix~\ref{app:synthetic-dgp}), real-task preprocessing and predictors, and asset provenance (Table~\ref{tab:asset-provenance}).
\item Compute: omitted cells, runtime notes, and the compute environment (Table~\ref{tab:compute}).
\item Archived runs: stratified profiles, stratification sensitivity (Table~\ref{tab:real-strata-sensitivity}), PBMC sensitivity, and full real-task, per-stratum, and runtime tables (Tables~\ref{tab:real-summary-1}--\ref{tab:runtime-real}).
\item Fixed-map series and supplementary comparisons (Appendix~\ref{app:additional-evaluation}): terminology, radius ratios, paired synthetic intervals, BatchMVP, map and count sweeps, and the controlled predictor-quality experiment.
\end{itemize}

\paragraph{Reproducibility.}
The accompanying supplement contains the source-result snapshots, SHA256 manifests, and the original plotting template. Its audit script checks raw repetition metrics against summaries and rebuilds the numeric tables from the same sources as the figures. The SimplexTasks-12 artifact supplies task cards, configurations, permitted derived arrays, and rebuild instructions for restricted assets. Current released runners use fixed response-free maps; this does not retroactively certify the protocol of older result files. AffectiveText evaluation uses frozen outputs, and its open fallback constructs a different predictor rather than reproducing the closed-model cache.

\subsection{Formal proof setup}

\paragraph{Score-level reduction and quantile convention.}
All proofs are stated at the scalar nonconformity-score level. Let $Z=(X,Y)$ be an observation, where $Y\in\DeltaK$ is the target and $U=\hat f(X)\in\DeltaK$ is the output of a fixed predictor. Let
\[
R=d(Y,U)\in[0,\infty)
\]
be a task-defined nonconformity score. For a calibration sample of size $n$, define the split-conformal quantile index
\[
k_{n,\alpha}=\ceil{(n+1)(1-\alpha)}.
\]
If $k_{n,\alpha}\le n$, the conformal threshold is the $k_{n,\alpha}$-th smallest calibration score; otherwise the threshold is set to $+\infty$. This is the standard conservative split-conformal convention.

\begin{remark}[Conditioning on fitted objects]
All validity statements below are conditional on any predictor-fit sample, fixed stratification map, or fixed scale estimator used before conformal calibration. After conditioning on those fitted objects, the calibration and test observations must remain exchangeable.
\end{remark}

\subsection{Standard split-validity results used in the paper}

\begin{lemma}[Rank lemma]\label{lem:rank}
Let $S_1,\dots,S_n,S_{n+1}$ be exchangeable real-valued scores and let $\hat q$ be the split-conformal threshold computed from $S_1,\dots,S_n$ at level $1-\alpha$ using the convention above. Then
\[
\Pbb\{S_{n+1}\le \hat q\}\ge 1-\alpha.
\]
If the scores are almost surely distinct and $k_{n,\alpha}\le n$, then
\[
\Pbb\{S_{n+1}\le \hat q\}=\frac{k_{n,\alpha}}{n+1}.
\]
\end{lemma}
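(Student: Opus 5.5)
The plan is the classical rank argument, carried out with a random tie-break so that ties among the scores need no separate treatment. Take $W\sim\Unif[0,1]$ independent of the scores, draw an independent uniform random permutation to break ties, and let $\rho$ be the rank of $S_{n+1}$ among $S_1,\dots,S_{n+1}$, where ties are ordered by that permutation. Exchangeability makes the joint law of $(S_1,\dots,S_{n+1})$ invariant under permutations, and the tie-break is independent and uniform. Hence $\rho$ is uniformly distributed on $\{1,\dots,n+1\}$, and exactly so whenever the scores are a.s.\ distinct, because then no tie-break is needed.

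The key deterministic step links the event $\{S_{n+1}\le\hat q\}$ to $\rho$. First dispose of the case $k_{n,\alpha}>n$: there $\hat q=+\infty$, coverage is $1\ge 1-\alpha$, and the lemma makes no equality claim. So assume $k:=k_{n,\alpha}\le n$, and let $\hat q=S_{(k)}$ be the $k$-th smallest of the $n$ calibration scores. I will show two things.
\begin{itemize}[leftmargin=1.5em,itemsep=1pt,topsep=2pt]
\item If $\rho\le k$, then $S_{n+1}\le\hat q$. At most $k-1$ calibration scores precede $S_{n+1}$ in the tie-broken order, so at least $n-k+1$ calibration scores are $\ge S_{n+1}$. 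Hence the $k$-th smallest calibration score is $\ge S_{n+1}$.
\item If the scores are distinct and $\rho>k$, then at least $k$ calibration scores are strictly smaller than $S_{n+1}$, so $\hat q<S_{n+1}$.
\end{itemize}
Together these give $\{\rho\le k\}\subseteq\{S_{n+1}\le\hat q\}$ in general, with equality of events when the scores are distinct.

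Taking probabilities, $\Pbb\{S_{n+1}\le\hat q\}\ge\Pbb\{\rho\le k\}=k/(n+1)$. Since $k=\ceil{(n+1)(1-\alpha)}\ge(n+1)(1-\alpha)$, this is at least $1-\alpha$. In the distinct case the events coincide, which gives the exact value $k/(n+1)$.

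The only delicate point is uniformity of $\rho$ under mere exchangeability with possible ties. The random tie-break handles this: for any fixed permutation $\pi$, the joint law of the scores and tie-break variables is invariant under relabelling by $\pi$. Therefore each index is equally likely to occupy each rank, so $\Pbb\{\rho=j\}=1/(n+1)$ for every $j$.
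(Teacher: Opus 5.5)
Your proposal is correct and follows the paper's proof. Both use random tie-breaking to make the rank $\rho$ of $S_{n+1}$ uniform on $\{1,\dots,n+1\}$, then the inclusion $\{\rho\le k\}\subseteq\{S_{n+1}\le\hat q\}$ to get coverage at least $k/(n+1)\ge 1-\alpha$, and coincidence of these events when the scores are a.s.\ distinct to get the equality. (Your auxiliary $W\sim\Unif[0,1]$ is never used; the independent uniform permutation alone does the tie-breaking, playing the role of the paper's continuous tie-breakers.)
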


\begin{proof}
If $k_{n,\alpha}>n$, then $\hat q=+\infty$ and the claim is immediate. Assume $k=k_{n,\alpha}\le n$. Break ties by adding iid infinitesimal continuous tie-breakers independent of the scores, and let $\widetilde{\operatorname{rank}}(S_{n+1})$ be the resulting rank of the test score among $S_1,\dots,S_n,S_{n+1}$. By exchangeability, this randomized rank is uniform on $\{1,\dots,n+1\}$. If $\widetilde{\operatorname{rank}}(S_{n+1})\le k$, then the $k$-th calibration order statistic is at least $S_{n+1}$, hence $S_{n+1}\le \hat q$. Therefore
\[
\Pbb\{S_{n+1}\le \hat q\}
\ge
\Pbb\{\widetilde{\operatorname{rank}}(S_{n+1})\le k\}
=\frac{k}{n+1}
\ge 1-\alpha.
\]
If the scores are almost surely distinct, then $\widetilde{\operatorname{rank}}(S_{n+1})$ equals the ordinary rank and the event $S_{n+1}\le \hat q$ is equivalent to $\widetilde{\operatorname{rank}}(S_{n+1})\le k$, which gives the equality statement.
\end{proof}

\begin{theorem}[Split conformal prediction on the simplex]\label{thm:split}
Assume the calibration pairs $(U_i,R_i)_{i=1}^n$ and the test pair $(U_{n+1},R_{n+1})$ are exchangeable after conditioning on the fitted predictor $\hat f$. Let $\hat q$ be the split-conformal threshold computed from $R_1,\dots,R_n$, and define
\[
\cC_{\mathrm{split}}(x)=
\left\{y\in\DeltaK:
d\bigl(y,\hat f(x)\bigr)\le \hat q
\right\}.
\]
Then
\[
\Pbb\{Y_{n+1}\in \cC_{\mathrm{split}}(X_{n+1})\}\ge 1-\alpha.
\]
\end{theorem}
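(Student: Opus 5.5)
The plan is to reduce Theorem~\ref{thm:split} to the Rank lemma (Lemma~\ref{lem:rank}) by working at the scalar score level, as the proof setup prescribes. Set $S_i = R_i = d(Y_i,U_i)$ for $i=1,\dots,n+1$, where $U_i=\hat f(X_i)$. The first step is to check that $(S_1,\dots,S_{n+1})$ is exchangeable conditional on the fitted predictor $\hat f$. Each $S_i$ is the image of the $i$-th pair $(U_i,R_i)$ under the same fixed coordinate map $(u,r)\mapsto r$. A common measurable map applied coordinatewise to an exchangeable sequence yields an exchangeable sequence, because permuting the inputs permutes the outputs identically. Conditioning on $\hat f$ is harmless: by the Remark on conditioning on fitted objects, $\hat f$ is fixed before calibration, so $d$ and $\hat f$ act as deterministic maps under the conditional law.

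The second step identifies the coverage event with the score event. By the definition of $\cC_{\mathrm{split}}$,
\[
Y_{n+1}\in\cC_{\mathrm{split}}(X_{n+1}) \iff d\bigl(Y_{n+1},\hat f(X_{n+1})\bigr)\le\hat q \iff S_{n+1}\le\hat q .
\]
This uses the fact that $Y_{n+1}\in\DeltaK$, so membership is governed only by the distance constraint. The threshold $\hat q$ is exactly the split-conformal threshold computed from $S_1,\dots,S_n$ with the convention $k_{n,\alpha}=\ceil{(n+1)(1-\alpha)}$, and it equals $+\infty$ when $k_{n,\alpha}>n$. The hypotheses of Lemma~\ref{lem:rank} therefore hold conditionally on $\hat f$.

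The third step applies Lemma~\ref{lem:rank} to obtain
\[
\Pbb\{S_{n+1}\le\hat q\mid\hat f\}\ge 1-\alpha .
\]
Taking expectations over $\hat f$ via the tower property then gives the unconditional bound. Ties need no separate treatment, since the lemma's tie-breaking argument already gives the inequality.

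The only real obstacle is the bookkeeping of conditioning. Exchangeability is assumed only after conditioning on $\hat f$, so the lemma must be invoked under the conditional law with the fitting sample independent of, or at least fixed relative to, the calibration/test split. Everything else is a direct substitution. If only the conditional statement is desired, the final averaging step can be dropped.
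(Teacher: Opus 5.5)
Your proposal is correct and follows the paper's proof: identify the coverage event with $R_{n+1}\le\hat q$, then apply Lemma~\ref{lem:rank} to the exchangeable scores $R_1,\dots,R_{n+1}$. Your extra bookkeeping (exchangeability of the projected scores and averaging over $\hat f$) spells out steps that the paper's two-line proof leaves implicit.
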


\begin{proof}
The event $Y_{n+1}\in \cC_{\mathrm{split}}(X_{n+1})$ is exactly $R_{n+1}\le \hat q$. Apply Lemma~\ref{lem:rank} to the exchangeable scores $R_1,\dots,R_n,R_{n+1}$.
\end{proof}

\begin{theorem}[Two-stage normalized split conformal validity]\label{thm:two-stage}
Let $\cD_{\mathrm{fit}}$ be data used to fit $\hat f$ and let $\cD_{\mathrm{scale}}$ be data used to fit a positive scale function $\hat\sigma:\DeltaK\to(0,\infty)$. Assume that, conditional on $\cD_{\mathrm{fit}}$ and $\cD_{\mathrm{scale}}$, the calibration pairs $(U_i,R_i)_{i=1}^n$ and the test pair $(U_{n+1},R_{n+1})$ are exchangeable. Define
\[
T_i=\frac{R_i}{\hat\sigma(U_i)},\qquad i=1,\dots,n,
\]
let $\hat q_T$ be the split-conformal threshold of $T_1,\dots,T_n$, and define
\[
\cC_{\mathrm{norm}}(x)=
\left\{y\in\DeltaK:
\frac{d\bigl(y,\hat f(x)\bigr)}{\hat\sigma(\hat f(x))}
\le \hat q_T
\right\}.
\]
Then
\[
\Pbb\{Y_{n+1}\in \cC_{\mathrm{norm}}(X_{n+1})\mid \cD_{\mathrm{fit}},\cD_{\mathrm{scale}}\}\ge 1-\alpha,
\]
and therefore
\[
\Pbb\{Y_{n+1}\in \cC_{\mathrm{norm}}(X_{n+1})\}\ge 1-\alpha.
\]
\end{theorem}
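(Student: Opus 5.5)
The plan is to reduce Theorem~\ref{thm:two-stage} to Lemma~\ref{lem:rank}, applied conditionally on the fitted objects, and then remove the conditioning by the tower property. The key observation is that once $\cD_{\mathrm{fit}}$ and $\cD_{\mathrm{scale}}$ are fixed, both $\hat f$ and $\hat\sigma$ are deterministic functions. The normalized score $T_i=R_i/\hat\sigma(U_i)$ is therefore a fixed measurable map $\phi(U_i,R_i)$ applied coordinatewise to the pairs. It is well defined and finite because $\hat\sigma>0$.

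\textbf{Step 1: the normalized scores are exchangeable.} Work conditionally on $\cD_{\mathrm{fit}},\cD_{\mathrm{scale}}$. By assumption the pairs $(U_i,R_i)_{i=1}^{n+1}$ are exchangeable under this conditional law. Applying the same fixed function $\phi$ to each coordinate preserves exchangeability: for any permutation $\pi$, the vector $(\phi(U_{\pi(i)},R_{\pi(i)}))_i$ is the image of a permuted vector that is equal in distribution to the original. Hence $T_1,\dots,T_{n+1}$ are exchangeable conditionally on the fitted data.

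\textbf{Step 2: identify the coverage event.} Since $U_{n+1}=\hat f(X_{n+1})$ and $R_{n+1}=d(Y_{n+1},U_{n+1})$, the event $Y_{n+1}\in\cC_{\mathrm{norm}}(X_{n+1})$ is exactly $d(Y_{n+1},\hat f(X_{n+1}))/\hat\sigma(\hat f(X_{n+1}))\le\hat q_T$, that is, $T_{n+1}\le\hat q_T$. Here $\hat q_T$ is the split-conformal threshold of $T_1,\dots,T_n$ under the stated convention, including the value $+\infty$ when $k_{n,\alpha}>n$. Lemma~\ref{lem:rank}, applied under the conditional probability measure, then gives the conditional bound $\ge 1-\alpha$.

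\textbf{Step 3: remove the conditioning.} Take expectations of the conditional coverage bound over $(\cD_{\mathrm{fit}},\cD_{\mathrm{scale}})$. Since the bound holds almost surely, the unconditional coverage is also at least $1-\alpha$.

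\textbf{Main obstacle.} The only delicate point is measure-theoretic: Lemma~\ref{lem:rank} has to be invoked under a regular conditional distribution given the fitted data, and the tie-breaking randomization in its proof must be independent of that data. I would handle this by drawing the tie-breakers independently of everything, so that the lemma applies verbatim for almost every realization of $(\cD_{\mathrm{fit}},\cD_{\mathrm{scale}})$. No distinctness of the $T_i$ is needed, because only the inequality part of the lemma is used.
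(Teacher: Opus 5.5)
Your proposal is correct and matches the paper's proof step for step: conditional on $\cD_{\mathrm{fit}}$ and $\cD_{\mathrm{scale}}$, the map $\hat\sigma$ is fixed, so the normalized scores are exchangeable. The coverage event is exactly $T_{n+1}\le\hat q_T$, Lemma~\ref{lem:rank} gives the conditional bound, and averaging over the fitted data removes the conditioning. Your extra care about applying the lemma under a regular conditional distribution, with tie-breakers independent of everything, makes explicit a point the paper leaves implicit.
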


\begin{proof}
Conditional on $\cD_{\mathrm{fit}}$ and $\cD_{\mathrm{scale}}$, the function $u\mapsto\hat\sigma(u)$ is fixed. Hence the transformed scores $T_i=R_i/\hat\sigma(U_i)$, $i=1,\dots,n+1$, are exchangeable because they are obtained from exchangeable pairs $(U_i,R_i)$ by applying the same fixed measurable map to each pair. The event $Y_{n+1}\in \cC_{\mathrm{norm}}(X_{n+1})$ is exactly $T_{n+1}\le \hat q_T$. Lemma~\ref{lem:rank} gives the conditional coverage result, and averaging over $\cD_{\mathrm{fit}}$ and $\cD_{\mathrm{scale}}$ gives the unconditional result.
\end{proof}

\begin{remark}[What Theorem~\ref{thm:two-stage} does and does not say]
The theorem proves marginal validity for the normalized score. It does not prove conditional validity given $U=u$ or given a fine prediction-space stratum. The empirical question is whether the fitted scale function improves allocation and efficiency relative to global calibration.
\end{remark}

\begin{theorem}[Fixed group-wise split conformal validity]\label{thm:mondrian}

Let $g:\DeltaK\to\{1,\dots,m\}$ be a fixed stratification map that does not depend on the calibration or test responses, and write $G_i=g(U_i)$. The map may be specified analytically or learned from reference predictor outputs independent of the calibration/test pairs. Response-freeness alone is insufficient: the validity statement below requires the calibration and test pairs to remain iid after conditioning on $g$. For each group $s$, let

\[
\mathcal{I}_s=\{i\in\{1,\dots,n\}:G_i=s\},\qquad n_s=|\mathcal{I}_s|.
\]
Let $\hat q_s$ be the split-conformal threshold computed from $\{R_i:i\in\mathcal{I}_s\}$ at level $1-\alpha$, with $\hat q_s=+\infty$ if the group has too few calibration points for a finite threshold. Define
\[
\cC_{\mathrm{grp}}(x)=
\left\{y\in\DeltaK:
d\bigl(y,\hat f(x)\bigr)\le \hat q_{g(\hat f(x))}
\right\}.
\]
If the calibration and test pairs are iid after conditioning on the fitted predictor and on the fixed map $g$, then for each group $s$ with $\Pbb(G=s)>0$,
\[
\Pbb\{Y_{n+1}\in\cC_{\mathrm{grp}}(X_{n+1})\mid G_{n+1}=s\}\ge 1-\alpha.
\]
Consequently,
\[
\Pbb\{Y_{n+1}\in\cC_{\mathrm{grp}}(X_{n+1})\}\ge 1-\alpha.
\]
\end{theorem}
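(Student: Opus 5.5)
The plan is to condition on the group labels of all $n+1$ points and reduce to Lemma~\ref{lem:rank} within the group of the test point. Throughout we condition on the fitted predictor and on the fixed map $g$, so that the pairs $(U_i,R_i)_{i=1}^{n+1}$ are iid and each $G_i=g(U_i)$ is a fixed measurable function of its own pair. Fix a group $s$ with $\Pbb(G=s)>0$. Let $\mathbf G=(G_1,\dots,G_n)$ denote the calibration labels, and consider the event $E_{\mathbf v}=\{\mathbf G=\mathbf v,\ G_{n+1}=s\}$ for a label vector $\mathbf v\in\{1,\dots,m\}^n$ with $\Pbb(E_{\mathbf v})>0$. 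On $E_{\mathbf v}$ the index set $\mathcal I_s=\{i:v_i=s\}$ is deterministic, with size $n_s=|\mathcal I_s|$.

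The key step is to show that, conditional on $E_{\mathbf v}$, the scores $\{R_i:i\in\mathcal I_s\}\cup\{R_{n+1}\}$ are exchangeable. For this I would use independence: conditioning iid pairs on the event $\prod_i\{G_i=v_i\}$ leaves them independent. Each pair is then distributed according to the law of $(U,R)$ given $G=v_i$. Hence the $n_s+1$ pairs carrying label $s$ are iid with the law of $(U,R)\mid G=s$, and in particular their scores are exchangeable. This is where the iid assumption, rather than mere exchangeability, is used: exchangeability alone would not make the within-group subsample exchangeable after conditioning on the full label pattern.

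The threshold $\hat q_s$ is the split-conformal threshold of these $n_s$ scores at level $1-\alpha$, with the convention $\hat q_s=+\infty$ when $k_{n_s,\alpha}>n_s$ (including $n_s=0$). On $E_{\mathbf v}$, the coverage event $Y_{n+1}\in\cC_{\mathrm{grp}}(X_{n+1})$ is exactly $R_{n+1}\le\hat q_s$. Lemma~\ref{lem:rank}, applied to the $n_s+1$ exchangeable scores, therefore gives
\[
\Pbb\{R_{n+1}\le \hat q_s\mid E_{\mathbf v}\}\ge 1-\alpha .
\]
Averaging over $\mathbf v$ with weights $\Pbb(\mathbf G=\mathbf v\mid G_{n+1}=s)$ yields the group-conditional bound. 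The marginal statement then follows by the law of total probability over $s$, since groups with $\Pbb(G=s)=0$ carry no mass.

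I expect the main obstacle to be stating the conditioning step cleanly. One has to make precise that conditioning on the entire label vector, not just on $G_{n+1}=s$, preserves exchangeability within the group. One also has to check that Lemma~\ref{lem:rank} applies with the random sample size $n_s$ treated as fixed on each $E_{\mathbf v}$, including the degenerate small-group case where the threshold is infinite and coverage is trivial.
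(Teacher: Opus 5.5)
Your argument is correct and is essentially the paper's proof. The paper conditions on $G_{n+1}=s$ and the index set $\mathcal I_s$ rather than on the full label vector $\mathbf G$, but both versions observe that the label-$s$ calibration scores and $R_{n+1}$ are iid from $R\mid G=s$, apply Lemma~\ref{lem:rank} with $n_s$ held fixed, and average.

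One side remark is wrong, though the proof does not depend on it: exchangeability alone would also suffice. Conditional on $\{\mathbf G=\mathbf v,\ G_{n+1}=s\}$, the joint law of the $n+1$ pairs is invariant under every label-preserving permutation, in particular under permutations of $\mathcal I_s\cup\{n+1\}$.
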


\begin{proof}
Fix a group $s$ with positive probability and condition on the event $G_{n+1}=s$ and on the calibration index set $\mathcal I_s$. Conditional on this event, the variables $\{R_i:i\in\mathcal I_s\}$ and $R_{n+1}$ are iid from the conditional law of $R\mid G=s$, hence exchangeable. The threshold $\hat q_s$ is exactly the split-conformal threshold computed from the calibration scores in this conditional sample, with sample size $n_s=|\mathcal I_s|$. Applying Lemma~\ref{lem:rank} within group $s$ yields
\[
\Pbb\{R_{n+1}\le \hat q_s\mid G_{n+1}=s,\mathcal I_s\}\ge 1-\alpha.
\]
Averaging over $\mathcal I_s$ gives the conditional group guarantee, and averaging over $G_{n+1}$ gives the unconditional guarantee.
\end{proof}

\begin{remark}[Finite-sample fragmentation]
Theorem~\ref{thm:mondrian} proves validity, not efficiency. When $n_s$ is small, the group-specific threshold can become large or infinite, and the resulting sets can be unstable or inefficient. This is the formal version of the benchmark's finite-sample fragmentation caveat.
\end{remark}

\subsection{Formal proofs for the three main propositions}

\paragraph{Proof of Proposition~\ref{prop:split-disparity}.} We restate the proposition in self-contained form and prove it. For example, $p=\alpha/2$ and $\lambda=2$ satisfy its condition, since $p(1-1/\lambda)=\alpha/4<\alpha$.
\begin{proposition}[Constructive form of Proposition~\ref{prop:split-disparity}]\label{prop:global-disparity}
Let $G\in\{A,B\}$ be a prediction-space stratum with
\[
\Pbb(G=B)=p,\qquad \Pbb(G=A)=1-p.
\]
For some $\lambda>1$, suppose
\[
R\mid G=A\sim \Unif[0,1],\qquad
R\mid G=B\sim \Unif[0,\lambda].
\]
Fix $\alpha\in(0,1)$ and assume
\[
\alpha>p\left(1-\frac{1}{\lambda}\right).
\]
Then there exists a single global threshold $q_\star$ such that
\[
\Pbb(R\le q_\star)=1-\alpha,
\]
but the stratum-wise coverages are unequal:
\[
\Pbb(R\le q_\star\mid G=A)>1-\alpha,\qquad
\Pbb(R\le q_\star\mid G=B)<1-\alpha.
\]
\end{proposition}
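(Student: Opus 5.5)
The plan is a direct computation of the pooled distribution function of $R$ on $[0,1]$, followed by reading off the stratum coverages at the explicit threshold
\[
q_\star=\frac{1-\alpha}{1-p+p/\lambda}.
\]

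\textbf{Step 1: the pooled CDF on $[0,1]$.} For $q\in[0,1]$, the law of total probability gives
\[
F(q)=\Pbb(R\le q)=(1-p)\,q+p\,\frac{q}{\lambda}=q\left(1-p+\frac{p}{\lambda}\right).
\]
This uses $q\le 1\le\lambda$, so both conditional CDFs are in their linear regimes. Hence $F$ is linear on $[0,1]$ with slope $c:=1-p+p/\lambda\in(0,1)$.

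\textbf{Step 2: $q_\star$ is admissible.} I will check that $q_\star=(1-\alpha)/c$ lies in $(0,1)$. The inequality $q_\star<1$ is equivalent to $1-\alpha<1-p+p/\lambda$, that is, $\alpha>p(1-1/\lambda)$, which is exactly the hypothesis. This is the only place the hypothesis is used, and checking it is the main point of care: the linear formula for $F$ is valid only on $[0,1]$. Positivity of $q_\star$ is immediate from $\alpha<1$. Given admissibility, Step 1 yields $F(q_\star)=1-\alpha$, which also confirms that $q_\star$ is the pooled population quantile as stated in Proposition~\ref{prop:split-disparity}.

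\textbf{Step 3: stratum coverages.} Since $q_\star\in(0,1)$, the stratum coverages are
\[
\Pbb(R\le q_\star\mid G=A)=q_\star,\qquad \Pbb(R\le q_\star\mid G=B)=\frac{q_\star}{\lambda}.
\]
The coverage $1-\alpha=F(q_\star)$ is a strict convex combination, with weights $1-p$ and $p$ in $(0,1)$, of these two values. Because $\lambda>1$, we have $q_\star/\lambda<q_\star$ strictly.

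\textbf{Step 4: conclusion.} A strict convex combination of two distinct numbers lies strictly between them. Therefore
\[
\frac{q_\star}{\lambda}<1-\alpha<q_\star,
\]
which gives both claimed strict inequalities. Explicitly, $q_\star>1-\alpha$ also follows from $c<1$, since $p(1-1/\lambda)>0$.
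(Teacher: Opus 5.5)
Your proof is correct and follows the paper's route: you use the same pooled CDF $F(q)=q(1-p+p/\lambda)$ on $[0,1]$, the same threshold $q_\star$, and the hypothesis in the same way to get $q_\star<1$. The only difference is cosmetic: you get both strict inequalities at once from the strict convex-combination argument (the content of Corollary~\ref{cor:mixture}), whereas the paper checks $1-p+p/\lambda<1$ and $p+\lambda(1-p)>1$ separately, and both arguments rely on $p\in(0,1)$ as assumed in Proposition~\ref{prop:split-disparity}.
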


\begin{proof}
For $q\in[0,1]$, the pooled residual cdf is
\[
F(q)=\Pbb(R\le q)
=(1-p)q+p\frac{q}{\lambda}
=q\left(1-p+\frac{p}{\lambda}\right).
\]
Set
\[
q_\star=\frac{1-\alpha}{1-p+p/\lambda}.
\]
The condition $\alpha>p(1-1/\lambda)$ is equivalent to
\[
1-\alpha<1-p+\frac{p}{\lambda},
\]
so $q_\star<1$. Therefore $q_\star\in[0,1]$ and $F(q_\star)=1-\alpha$. The stratum-wise coverages are
\[
\Pbb(R\le q_\star\mid G=A)=q_\star
=\frac{1-\alpha}{1-p+p/\lambda}
>1-\alpha,
\]
because $1-p+p/\lambda<1$, and
\[
\Pbb(R\le q_\star\mid G=B)=\frac{q_\star}{\lambda}
=\frac{1-\alpha}{\lambda(1-p+p/\lambda)}
<1-\alpha,
\]
because $\lambda(1-p+p/\lambda)=p+\lambda(1-p)>1$.
\end{proof}

\begin{corollary}[Mixture validity does not imply stratum validity]\label{cor:mixture}
Let $S_1,\dots,S_m$ be prediction-space strata with weights $\pi_s>0$ satisfying $\sum_{s=1}^m\pi_s=1$ and conditional residual cdfs $F_s$. If a global threshold $q$ satisfies
\[
\sum_{s=1}^m \pi_s F_s(q)=1-\alpha
\]
and the values $F_s(q)$ are not all equal, then at least one stratum under-covers and at least one stratum over-covers relative to $1-\alpha$.
\end{corollary}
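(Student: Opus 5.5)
The plan is a direct weighted-averaging argument; no probabilistic machinery beyond the mixture identity is needed. Write $c_s=F_s(q)$ for the stratum-wise coverage of the global threshold, so the hypothesis reads $\sum_{s=1}^m\pi_s c_s=1-\alpha$ with strictly positive weights $\pi_s$ summing to one. Because the weights are positive and sum to one, $1-\alpha$ is a strict convex combination of the numbers $c_1,\dots,c_m$.

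First, I show that some stratum under-covers. Suppose, for contradiction, that $c_s\ge 1-\alpha$ for every $s$. Then
\[
\sum_s\pi_s\bigl(c_s-(1-\alpha)\bigr)=0,
\]
and this is a sum of nonnegative terms with positive weights. Hence every term vanishes, so $c_s=1-\alpha$ for all $s$. That contradicts the assumption that the $F_s(q)$ are not all equal. Therefore some $c_s<1-\alpha$.

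Second, the over-covering stratum follows by the symmetric argument: assume $c_s\le 1-\alpha$ for all $s$ and derive the same contradiction. Equivalently, the set $\{s:c_s-(1-\alpha)\neq 0\}$ is nonempty, and the positively weighted deviations must sum to zero. So the deviations take both signs, which is exactly the claim.

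There is no real obstacle. The only point needing care is that strict positivity of every $\pi_s$ is what rules out a non-equal $c_s$ hiding in a zero-weight stratum. I will state explicitly where $\pi_s>0$ is used. I will also note that Proposition~\ref{prop:global-disparity} is the special case $m=2$ with $F_A(q_\star)\ne F_B(q_\star)$.
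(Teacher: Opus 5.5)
Your proof is correct and is essentially the paper's argument: both exclude the one-sided cases (all $F_s(q)\ge 1-\alpha$, or all $F_s(q)\le 1-\alpha$). With strictly positive weights summing to one, a one-sided deviation that is not identically zero pushes the weighted average strictly away from $1-\alpha$; your vanishing-sum formulation is the contrapositive of the paper's ``strictly larger/strictly smaller average'' step.
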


\begin{proof}
If all $F_s(q)$ were at least $1-\alpha$ and at least one were strictly larger, the weighted average would be strictly larger than $1-\alpha$. If all were at most $1-\alpha$ and at least one were strictly smaller, the weighted average would be strictly smaller. Since the weighted average equals $1-\alpha$ and the values are not all equal, both signs of deviation must occur.
\end{proof}

\paragraph{Proposition~\ref{prop:oneshot} is proved by the following counterexample.}
\begin{proposition}[Counterexample for non-leave-one-out one-shot normalization]\label{prop:oneshot-counterexample}
For every $n\ge 2$ and every nominal level $\alpha$ satisfying
\[
\frac{1}{n+1}\le \alpha<\frac{1}{2},
\]
there exists an iid calibration/test distribution and a one-shot local scale estimator fitted on the calibration residuals such that all normalized calibration scores are identically equal to $1$, the normalized test score has a different distribution, and the resulting conformal set has coverage exactly $1/2<1-\alpha$.
\end{proposition}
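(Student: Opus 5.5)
The construction makes the one-shot estimator a self-inclusive $1$-nearest-neighbour residual estimator. At each calibration point it then returns that point's own residual, so every normalized calibration score collapses to $1$. The test point instead borrows the residual of a different, independent observation.

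\textbf{Data model.} Take $K\ge 2$ and let $(U,R)$ have independent components. Let $U$ be drawn from any continuous distribution on $\DeltaK$ (for instance uniform), so that the points $U_1,\dots,U_{n+1}$ are almost surely distinct. Let $R\sim\Unif[1,2]$, so residuals are positive, continuous and almost surely distinct. All $n+1$ pairs are iid, which is far stronger than exchangeability.

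\textbf{Estimator.} Define the one-shot scale from the calibration sample by
\[
\hat\sigma_n(u)=R_{J(u)},\qquad J(u)=\arg\min_{1\le j\le n}\|u-U_j\|,
\]
with ties broken by smallest index. This is the kNN estimator with $k=1$ fitted without leaving the query point out. Since the $U_j$ are a.s. distinct, $J(U_i)=i$ for every calibration index $i$. Hence $T_i=R_i/\hat\sigma_n(U_i)=1$ identically for $i=1,\dots,n$.

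\textbf{Threshold.} The condition $\alpha\ge 1/(n+1)$ gives $(n+1)(1-\alpha)\le n$, so $k_{n,\alpha}\le n$. The threshold is therefore the $k_{n,\alpha}$-th order statistic of the calibration scores, which equals $1$ exactly. The conformal set is $\{y: d(y,U_{n+1})\le \hat\sigma_n(U_{n+1})\}$, and coverage is the event $R_{n+1}\le R_{J^\ast}$ with $J^\ast=J(U_{n+1})$.

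\textbf{Coverage computation.} The index $J^\ast$ is a measurable function of $(U_1,\dots,U_{n+1})$ only, so it is independent of $(R_1,\dots,R_{n+1})$. Conditionally on $J^\ast=j$, the variables $R_{n+1}$ and $R_j$ are iid continuous. Therefore
\[
\Pbb(R_{n+1}\le R_j\mid J^\ast=j)=\tfrac12,
\]
and averaging over $j$ gives coverage exactly $1/2$. Since $\alpha<1/2$, this is strictly below $1-\alpha$.

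\textbf{Test score distribution.} The normalized test score $T_{n+1}=R_{n+1}/R_{J^\ast}$ is the ratio of two independent $\Unif[1,2]$ variables. It is non-degenerate on $[1/2,2]$, so it differs in distribution from the calibration scores, which are identically $1$.

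\textbf{Main obstacle.} The delicate step is the independence of $J^\ast$ from the residuals, which is what makes the conditional probability exactly $1/2$. Here it holds by design, because $U$ and $R$ are independent. I will also note that the same-sample reuse, i.e. the failure of leave-one-out, is exactly what breaks exchangeability between $T_1,\dots,T_n$ and $T_{n+1}$. That explains why Lemma~\ref{lem:rank} does not apply here.
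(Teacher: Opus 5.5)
Your construction is correct and is essentially the paper's: a self-inclusive nearest-neighbour scale $\hat\sigma_n(u)=R_{J(u)}$, with $U$ continuous and independent of $R$, makes every $T_i=1$ and reduces coverage to $\Pbb(R_{n+1}\le R_{J})=1/2$. The differences are cosmetic. The paper uses $U\sim\Unif[0,1]$ and $R\sim\Exp(1)$ rather than a simplex-valued $U$ and $R\sim\Unif[1,2]$. Your explicit check that $\alpha\ge 1/(n+1)$ forces $k_{n,\alpha}\le n$, so that the threshold is exactly $1$, spells out a step the paper leaves implicit.
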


\begin{proof}
Let
\[
U_i\iid \Unif[0,1],\qquad R_i\iid \Exp(1),\qquad i=1,\dots,n+1,
\]
with all $U_i$ independent of all $R_i$. Given the calibration sample, define the nearest-neighbor scale estimator
\[
\hat\sigma_n(u)=R_{J(u)},\qquad
J(u)\in\arg\min_{1\le j\le n}|u-U_j|,
\]
with deterministic tie-breaking. Since the $U_i$ are continuous, ties occur with probability zero. For every calibration point $U_i$, its nearest calibration point is itself, so
\[
\hat\sigma_n(U_i)=R_i,\qquad
T_i=\frac{R_i}{\hat\sigma_n(U_i)}=1,\qquad i=1,\dots,n.
\]
Thus the empirical conformal threshold computed from $T_1,\dots,T_n$ is $\hat q_T=1$ for the stated range of $\alpha$.

For the test point, let $J=J(U_{n+1})$. Conditional on $U_1,\dots,U_n,U_{n+1}$, the index $J$ is fixed and $R_J$ is independent of $R_{n+1}$ with the same exponential distribution. Hence
\[
T_{n+1}=\frac{R_{n+1}}{\hat\sigma_n(U_{n+1})}
=\frac{R_{n+1}}{R_J},
\]
and therefore
\[
\Pbb(T_{n+1}\le 1)=\Pbb(R_{n+1}\le R_J)=\frac{1}{2}.
\]
The coverage of the normalized split set is thus
\[
\Pbb(T_{n+1}\le \hat q_T)=\frac{1}{2}<1-\alpha.
\]
Moreover, $T_i=1$ almost surely for each calibration point, whereas $T_{n+1}$ is not identically $1$. The calibration normalized scores and the test normalized score are therefore not exchangeable.
\end{proof}

\begin{remark}[Interpretation for the benchmark]
This does not prove that every OneShot implementation fails. It proves that the standard split-conformal proof cannot be invoked merely because a quantile is computed from normalized calibration scores that were themselves constructed using the same residuals.
\end{remark}

\paragraph{Proposition~\ref{prop:trainres} separates marginal validity from allocation failure.}
\begin{theorem}[Fixed training-residual scale still gives marginal validity]\label{thm:trainres-marginal}
Let $\hat\sigma_{\mathrm{tr}}:\DeltaK\to(0,\infty)$ be any positive scale function fitted only from training data that are independent of the calibration and test observations. Conditional on the training data, define
\[
T_i=\frac{R_i}{\hat\sigma_{\mathrm{tr}}(U_i)},\qquad i=1,\dots,n,
\]
let $\hat q_T$ be the split-conformal threshold from $T_1,\dots,T_n$, and define
\[
\cC_{\mathrm{tr}}(x)=
\left\{y\in\DeltaK:
d\bigl(y,\hat f(x)\bigr)\le \hat q_T\hat\sigma_{\mathrm{tr}}(\hat f(x))
\right\}.
\]
If the calibration and test observations are exchangeable conditional on the training data, then
\[
\Pbb\{Y_{n+1}\in\cC_{\mathrm{tr}}(X_{n+1})\}\ge 1-\alpha.
\]
\end{theorem}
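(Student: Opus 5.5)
This is essentially Theorem~\ref{thm:two-stage} with the scale-fitting sample taken to be the training data, so the plan is to reduce to that result, or equivalently to Lemma~\ref{lem:rank}, by conditioning. Let $\mathcal D_{\mathrm{tr}}$ denote the training data. First, condition on $\mathcal D_{\mathrm{tr}}$ (and on the fitted predictor $\hat f$, which the Remark on conditioning on fitted objects allows). Under this conditioning, $u\mapsto\hat\sigma_{\mathrm{tr}}(u)$ is a fixed, deterministic, positive measurable function. Independence of the training data from the calibration and test observations ensures that conditioning does not alter their joint law. The hypothesis then states that the pairs $(U_i,R_i)_{i=1}^{n+1}$ are exchangeable given $\mathcal D_{\mathrm{tr}}$.

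Second, apply the same fixed map $(u,r)\mapsto r/\hat\sigma_{\mathrm{tr}}(u)$ to every pair. A common measurable map applied coordinatewise preserves exchangeability, so $T_1,\dots,T_{n+1}$ are conditionally exchangeable. This is well defined because $\hat\sigma_{\mathrm{tr}}>0$.

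Third, identify the coverage event. Since $\hat\sigma_{\mathrm{tr}}(U_{n+1})>0$ with $U_{n+1}=\hat f(X_{n+1})$, the statement $Y_{n+1}\in\cC_{\mathrm{tr}}(X_{n+1})$, i.e. $R_{n+1}\le \hat q_T\,\hat\sigma_{\mathrm{tr}}(U_{n+1})$, is equivalent to $T_{n+1}\le\hat q_T$. When $\hat q_T=+\infty$, both sides are trivially true.

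Finally, Lemma~\ref{lem:rank} applied conditionally gives $\Pbb\{T_{n+1}\le\hat q_T\mid\mathcal D_{\mathrm{tr}}\}\ge1-\alpha$, and taking expectations over $\mathcal D_{\mathrm{tr}}$ yields the unconditional bound.

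The only real subtlety is the first step. One must be careful that the scale function depends on nothing in the calibration or test sample, because this is exactly what fails in Proposition~\ref{prop:oneshot-counterexample}. The independence hypothesis is what makes the conditioning legitimate. No distinctness of scores is needed, since the lemma's inequality part handles ties.
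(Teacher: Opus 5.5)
Your proposal is correct and matches the paper's proof: condition on the training data so that $\hat\sigma_{\mathrm{tr}}$ is a fixed positive map, observe that applying it to every pair keeps $T_1,\dots,T_{n+1}$ exchangeable, identify the coverage event with $T_{n+1}\le\hat q_T$, and invoke Lemma~\ref{lem:rank}. Your additional remarks on the $\hat q_T=+\infty$ case, on ties, and on averaging over the training data only spell out steps the paper leaves implicit.
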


\begin{proof}
Conditional on the training data, $\hat\sigma_{\mathrm{tr}}$ is fixed. Therefore the normalized scores $T_i=R_i/\hat\sigma_{\mathrm{tr}}(U_i)$, $i=1,\dots,n+1$, are exchangeable. The coverage event is $T_{n+1}\le \hat q_T$. Lemma~\ref{lem:rank} gives the result.
\end{proof}

\begin{proposition}[Hard-stratum failure from overfit training scales]\label{prop:trainres-counterexample}
Fix $0<p<\alpha<1$ and $\varepsilon\in(0,1)$. There exists a two-stratum problem and a training-residual scale estimate such that the pooled normalized-score threshold is exactly calibrated at level $1-\alpha$ in the population, but the hard-stratum coverage equals
\[
\frac{\varepsilon(1-\alpha)}{2\bigl((1-p)+p\varepsilon/2\bigr)},
\]
which tends to $0$ as $\varepsilon\downarrow 0$.
\end{proposition}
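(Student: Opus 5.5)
The plan is an explicit two-stratum construction in the spirit of Proposition~\ref{prop:global-disparity}. The scale field is fixed in advance, as if fitted on independent training data. In the hard stratum it is badly \emph{under}estimated, as happens when an overfit predictor memorizes small training residuals there. All computations are at the population level, with the score reduction of the formal proof setup.

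\textbf{Construction.} Let $\Pbb(G=B)=p$ and $\Pbb(G=A)=1-p$, where $G=g(U)$ is a response-free stratum. Take
\[
R\mid G=A\sim\Unif[0,1],\qquad R\mid G=B\sim\Unif[0,2].
\]
Define the fixed scale $\hat\sigma_{\mathrm{tr}}(u)=1$ on $A$ and $\hat\sigma_{\mathrm{tr}}(u)=\varepsilon$ on $B$. It is positive and depends only on $u$, so Theorem~\ref{thm:trainres-marginal} applies and marginal validity holds. The normalized score then satisfies
\[
T\mid A\sim\Unif[0,1],\qquad T\mid B\sim\Unif[0,2/\varepsilon].
\]

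\textbf{Pooled calibration.} For $t\in[0,1]$, the pooled cdf is
\[
F_T(t)=(1-p)\,t+p\,\frac{\varepsilon t}{2}=t\Bigl((1-p)+\frac{p\varepsilon}{2}\Bigr).
\]
Set
\[
t_\star=\frac{1-\alpha}{(1-p)+p\varepsilon/2}.
\]
Check that $t_\star\le 1$. This holds because $1-\alpha<1-p<(1-p)+p\varepsilon/2$, using $p<\alpha$. Hence $F_T(t_\star)=1-\alpha$ exactly, so the pooled rule is exactly calibrated in the population.

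\textbf{Stratum coverages.} In the easy stratum, coverage is $t_\star>1-\alpha$. In the hard stratum, since $t_\star\le 1<2/\varepsilon$, coverage is
\[
\Pbb\bigl(T\le t_\star\mid B\bigr)=\frac{\varepsilon t_\star}{2}=\frac{\varepsilon(1-\alpha)}{2\bigl((1-p)+p\varepsilon/2\bigr)}.
\]
This matches the claimed formula. Its denominator tends to $2(1-p)>0$ as $\varepsilon\downarrow0$, so the coverage tends to $0$. Corollary~\ref{cor:mixture} is consistent with this: one stratum over-covers and one under-covers.

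\textbf{Main obstacle.} The only delicate point is choosing the regime so that $t_\star$ lands in $[0,1]$. There the pooled cdf is linear in both strata, and the algebra closes exactly. This is where the hypothesis $p<\alpha$ is used, and it works for every $\varepsilon\in(0,1)$. The ``sufficiently small $\varepsilon$'' clause of Proposition~\ref{prop:trainres} is then needed only for the limit statement.

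As an optional link to the finite-sample theory, Lemma~\ref{lem:rank} shows that the empirical threshold $\hat q_T$ concentrates near $t_\star$. The population statement, however, is what is claimed.
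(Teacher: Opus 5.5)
Your proof is essentially the paper's own. It uses the same two strata with $\Unif[0,1]$ and $\Unif[0,2]$ residuals, the same overfit scale field $\hat\sigma_{\mathrm{tr}}(A)=1$, $\hat\sigma_{\mathrm{tr}}(B)=\varepsilon$, the same linear pooled cdf on $[0,1]$ with $p<\alpha$ forcing $t_\star<1$, and the same hard-stratum coverage $\varepsilon t_\star/2$. One small correction to your optional aside: Lemma~\ref{lem:rank} gives a coverage bound, not concentration of $\hat q_T$ around $t_\star$ (that would need a DKW-type argument), but this does not affect the population-level statement.
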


\begin{proof}
Let $G\in\{A,B\}$ with $\Pbb(G=B)=p$ and $\Pbb(G=A)=1-p$. Let the fresh residual distributions be
\[
R\mid G=A\sim \Unif[0,1],\qquad
R\mid G=B\sim \Unif[0,2].
\]
Suppose the training residuals overfit the hard region $B$, so that the training-based scale estimate is
\[
\hat\sigma_{\mathrm{tr}}(A)=1,\qquad
\hat\sigma_{\mathrm{tr}}(B)=\varepsilon.
\]
The normalized score is $T=R/\hat\sigma_{\mathrm{tr}}(G)$. Hence, for $q\in[0,1]$,
\[
\Pbb(T\le q)
=(1-p)q+p\Pbb\{R/\varepsilon\le q\mid G=B\}
=q\left((1-p)+\frac{p\varepsilon}{2}\right).
\]
Define the population normalized threshold
\[
q_\star=\frac{1-\alpha}{(1-p)+p\varepsilon/2}.
\]
Since $p<\alpha$, we have
\[
1-\alpha<1-p<(1-p)+\frac{p\varepsilon}{2},
\]
so $q_\star\in(0,1)$ for every $\varepsilon\in(0,1)$, and then
\[
\Pbb(T\le q_\star)=1-\alpha.
\]
The hard-stratum prediction radius is $q_\star\varepsilon$, so the hard-stratum coverage is
\[
\Pbb(R\le q_\star\varepsilon\mid G=B)
=\frac{q_\star\varepsilon}{2}
=\frac{\varepsilon(1-\alpha)}{2\bigl((1-p)+p\varepsilon/2\bigr)}.
\]
This quantity tends to $0$ as $\varepsilon\downarrow 0$.
\end{proof}

\begin{remark}[Safe interpretation of Proposition~\ref{prop:trainres}]
Training-residual normalization should not be described as lacking a split-validity argument when the scale function is fitted on an independent training sample and then held fixed. The correct statement is that it can remain marginally valid while still producing severe hard-region misallocation when the training scale field is distorted by predictor overfit.
\end{remark}

\subsection{Approximation result: when local normalization reduces disparity}

\begin{assumption}[Scale model]\label{ass:scale}
There exists a positive scale function $\sigma:\cU\to(0,\infty)$ such that, conditional on $U=u$,
\[
R=\sigma(u)\xi_u,
\]
where $\xi_u$ is a normalized residual with conditional cdf
\[
F_u(t)=\Pbb(\xi_u\le t\mid U=u).
\]
\end{assumption}

\begin{assumption}[Normalized-tail stability]\label{ass:tail}
There is a reference cdf $F_\star$, continuous at a point $q_\star\ge 0$ satisfying $F_\star(q_\star)=1-\alpha$, and a number $\rho\ge 0$ such that, for all $u\in\cU$ and all $t$ in a neighborhood $I$ of $q_\star$,
\[
|F_u(t)-F_\star(t)|\le \rho.
\]
Assume also that every $F_u$ is locally Lipschitz on $I$ with common constant $L$.
\end{assumption}

\begin{assumption}[Scale and quantile accuracy]\label{ass:accuracy}
For numbers $\varepsilon,\eta\ge 0$, define the event
\[
\cE_{\varepsilon,\eta}
=
\left\{
\sup_{u\in\cU}\left|\frac{\hat\sigma(u)}{\sigma(u)}-1\right|\le \varepsilon,
\quad
|\hat q-q_\star|\le \eta
\right\}.
\]
Assume that on this event the interval between $q_\star$ and
\[
\hat q\frac{\hat\sigma(u)}{\sigma(u)}
\]
lies inside $I$ for every $u\in\cU$.
\end{assumption}

\begin{theorem}[Local coverage error under accurate normalization]\label{thm:approx-local}
Under Assumptions~\ref{ass:scale}--\ref{ass:accuracy}, assume additionally that $(\hat q,\hat\sigma)$ are fitted from data independent of the present test point, so conditional on $(\hat q,\hat\sigma)$ the normalized residual $\xi_u=R/\sigma(u)$ given $U=u$ retains cdf $F_u$. Then, on the event $\cE_{\varepsilon,\eta}$, the locally normalized prediction set
\[
\cC_{\mathrm{loc}}(x)
=
\left\{y\in\DeltaK:
d\bigl(y,\hat f(x)\bigr)\le \hat q\hat\sigma(\hat f(x))
\right\}
\]
satisfies, for every $u\in\cU$,
\[
\left|
\Pbb\{Y\in\cC_{\mathrm{loc}}(X)\mid U=u,\hat q,\hat\sigma\}
-(1-\alpha)
\right|
\le
\rho+L\left[\eta+(q_\star+\eta)\varepsilon\right].
\]
\end{theorem}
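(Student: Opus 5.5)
The plan is to reduce the conditional coverage at $u$ to a cdf evaluation and then split the error into a tail-stability term and a Lipschitz term. Fix $u\in\cU$ and work on $\cE_{\varepsilon,\eta}$, conditioning on $(\hat q,\hat\sigma)$. Write $U=u$, so $\hat f(X)=u$. The coverage event is $R\le \hat q\,\hat\sigma(u)$. By Assumption~\ref{ass:scale} and the independence hypothesis, this is $\xi_u\le t_u$ with
\[
t_u=\hat q\,\frac{\hat\sigma(u)}{\sigma(u)}.
\]
Since $\sigma(u)>0$, the conditional coverage equals $F_u(t_u)$ exactly.

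Next I would insert intermediate terms and use the triangle inequality. Because $F_\star(q_\star)=1-\alpha$,
\[
|F_u(t_u)-(1-\alpha)|\le |F_u(t_u)-F_u(q_\star)|+|F_u(q_\star)-F_\star(q_\star)|.
\]
The second term is at most $\rho$ by Assumption~\ref{ass:tail}, since $q_\star\in I$. For the first term, Assumption~\ref{ass:accuracy} guarantees that the segment between $q_\star$ and $t_u$ lies in $I$. The common local Lipschitz constant $L$ on $I$ therefore gives $|F_u(t_u)-F_u(q_\star)|\le L|t_u-q_\star|$. I read "locally Lipschitz on $I$ with constant $L$" as Lipschitz on the interval $I$, which is what the segment condition is designed to use.

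It remains to bound $|t_u-q_\star|$ on $\cE_{\varepsilon,\eta}$. Write $r=\hat\sigma(u)/\sigma(u)$, so that $|r-1|\le\varepsilon$, and decompose
\[
t_u-q_\star=\hat q(r-1)+(\hat q-q_\star).
\]
Then $|t_u-q_\star|\le |\hat q|\,\varepsilon+\eta$. Since $q_\star\ge 0$ and $|\hat q-q_\star|\le\eta$, we have $|\hat q|\le q_\star+\eta$. This gives $|t_u-q_\star|\le \eta+(q_\star+\eta)\varepsilon$, and combining the pieces yields the stated bound $\rho+L[\eta+(q_\star+\eta)\varepsilon]$, uniformly in $u$.

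The only delicate step is the first one. One must justify that, conditional on $(\hat q,\hat\sigma)$ and $U=u$, the law of $\xi_u$ is still $F_u$. This is exactly the independence hypothesis, so no randomness from the fitted objects leaks into the test residual. One also needs $\hat q\ge0$ or the absolute-value bound, which is handled by using $|\hat q|$. Everything else is arithmetic.
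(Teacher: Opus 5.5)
Your proposal is correct and matches the paper's proof essentially step for step. Both reduce coverage to $F_u(t_u)$ with $t_u=\hat q\,\hat\sigma(u)/\sigma(u)$, then bound $|t_u-q_\star|\le\eta+(q_\star+\eta)\varepsilon$ using $|\hat q|\le q_\star+\eta$. Both then split $|F_u(t_u)-F_\star(q_\star)|$ into a Lipschitz term on $I$ and the tail-stability term $\rho$.
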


\begin{proof}
Fix $u\in\cU$ and work on $\cE_{\varepsilon,\eta}$. Define
\[
t_u=\hat q\frac{\hat\sigma(u)}{\sigma(u)}.
\]
By the scale model,
\[
\Pbb\{Y\in\cC_{\mathrm{loc}}(X)\mid U=u,\hat q,\hat\sigma\}
=
\Pbb\{R\le \hat q\hat\sigma(u)\mid U=u,\hat q,\hat\sigma\}
=
F_u(t_u).
\]
Moreover,
\begin{align*}
|t_u-q_\star|
&\le \left|\hat q\frac{\hat\sigma(u)}{\sigma(u)}-\hat q\right|+|\hat q-q_\star| \\
&\le |\hat q|\left|\frac{\hat\sigma(u)}{\sigma(u)}-1\right|+\eta \\
&\le (q_\star+\eta)\varepsilon+\eta.
\end{align*}
By Assumption~\ref{ass:accuracy}, $t_u\in I$. Therefore,
\begin{align*}
|F_u(t_u)-(1-\alpha)|
&=|F_u(t_u)-F_\star(q_\star)| \\
&\le |F_u(t_u)-F_u(q_\star)|+|F_u(q_\star)-F_\star(q_\star)| \\
&\le L|t_u-q_\star|+\rho \\
&\le \rho+L\left[\eta+(q_\star+\eta)\varepsilon\right].
\end{align*}
\end{proof}

\begin{corollary}[Disparity bound for accurate local normalization]\label{cor:local-disparity}
Under the assumptions of Theorem~\ref{thm:approx-local}, on $\cE_{\varepsilon,\eta}$,
\[
\begin{aligned}
&\sup_{u,v\in\cU}
\left|
\Pbb\{Y\in\cC_{\mathrm{loc}}(X)\mid U=u,\hat q,\hat\sigma\}
-
\Pbb\{Y\in\cC_{\mathrm{loc}}(X)\mid U=v,\hat q,\hat\sigma\}
\right|
\\ &\qquad\le
2\rho+2L\left[\eta+(q_\star+\eta)\varepsilon\right].
\end{aligned}
\]
\end{corollary}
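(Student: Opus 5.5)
The corollary follows from Theorem~\ref{thm:approx-local} by the triangle inequality, passing through the common target $1-\alpha$. Work on the event $\cE_{\varepsilon,\eta}$ and abbreviate the conditional coverage at a point $u$ by
\[
c(u)=\Pbb\{Y\in\cC_{\mathrm{loc}}(X)\mid U=u,\hat q,\hat\sigma\}.
\]
Write $B=\rho+L\left[\eta+(q_\star+\eta)\varepsilon\right]$ for the pointwise bound from Theorem~\ref{thm:approx-local}.

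On this event the theorem applies to every $u\in\cU$, because its hypotheses are uniform in $u$. In particular, the tail constant $\rho$, the Lipschitz constant $L$ and the containment of the threshold interval in $I$ from Assumptions~\ref{ass:tail}--\ref{ass:accuracy} do not depend on $u$. Hence $|c(u)-(1-\alpha)|\le B$ holds simultaneously for all $u\in\cU$, not just for each $u$ separately.

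For any pair $u,v\in\cU$, insert and subtract $1-\alpha$:
\[
|c(u)-c(v)|\le |c(u)-(1-\alpha)|+|(1-\alpha)-c(v)|\le 2B.
\]
The right-hand side does not depend on $(u,v)$, so taking the supremum over $u,v\in\cU$ gives the stated bound $2\rho+2L\left[\eta+(q_\star+\eta)\varepsilon\right]$.

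There is no real obstacle in this argument. The one point to check is that $\cE_{\varepsilon,\eta}$ is a single event, defined through a supremum over $u$. This lets us apply the theorem's bound to both $u$ and $v$ on the same event, rather than on two different events. It is also why the bound holds for the supremum itself, not merely for each fixed pair.
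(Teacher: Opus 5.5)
Your proposal is correct and matches the paper's proof, which applies Theorem~\ref{thm:approx-local} once at $u$ and once at $v$ on the same event $\cE_{\varepsilon,\eta}$ and then uses the triangle inequality through $1-\alpha$. Your added remark, that the bound holds for all $u$ simultaneously because the event is defined through a supremum, is the right justification for passing to the supremum.
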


\begin{proof}
Apply Theorem~\ref{thm:approx-local} once at $u$, once at $v$, and use the triangle inequality.
\end{proof}

\subsection{Claims supported by this appendix}

\begin{itemize}[leftmargin=1.5em]
\item Standard split CP is marginally valid under exchangeability, by Theorem~\ref{thm:split}.
\item Global CP can be marginally valid and badly misallocated across simplex strata, by Proposition~\ref{prop:global-disparity} and Corollary~\ref{cor:mixture}.
\item Two-stage normalization is marginally valid when the scale function is fitted on a separate sample and held fixed during calibration, by Theorem~\ref{thm:two-stage}.
\item Fixed group-wise calibration is conditionally valid within the chosen groups, by Theorem~\ref{thm:mondrian}.
\item OneShot-style normalization is not automatically valid, by counterexample in Proposition~\ref{prop:oneshot-counterexample}.
\item Training-residual normalization can remain marginally valid while still collapsing on hard strata under overfit scales, by Theorem~\ref{thm:trainres-marginal} and Proposition~\ref{prop:trainres-counterexample}.
\item Local normalization reduces disparity when normalized tails are stable and scale estimates are accurate, by Theorem~\ref{thm:approx-local} and Corollary~\ref{cor:local-disparity}.
\end{itemize}

\subsection{Synthetic data-generating processes}
\label{app:synthetic-dgp}

\paragraph{Common construction.}
The synthetic processes start from a latent simplex mean $\mu(X)$. Writing $z=\mathrm{ilr}(\mu(X))$, the construction separates outcome noise from predictor bias:
\[
U=\mathrm{ilr}^{-1}\bigl(z+b(X,z)\bigr),\qquad
Y=\mathrm{ilr}^{-1}\bigl(z+\sigma(\mu(X))\varepsilon\bigr).
\]
Here $b=0$ outside the predictor-bias regime; $\varepsilon$ is Gaussian or heavy-tailed as specified by the configuration. This distinction matters for D4: changing predictor bias changes $U$, not the latent targets or noise. Scores are computed from the resulting $(Y,U)$ pairs.

\paragraph{Regime definitions.}
D1 uses constant residual scale as a negative control. D2 uses $\sigma(U)=\sigma_{\min}+c(1-H(U)/\log K)$, making low-entropy predictions harder. D3 uses step-function scales aligned with designated prediction-space groups. D4 adds a location-dependent displacement $b(X,z)$ to the predictor in ILR space; its angularly varying construction is specified in Appendix~\ref{app:additional-predictor}. D5 replaces the smooth-scale Gaussian noise with heavy-tailed $t$ noise. D6 increases the simplex dimension to $K=50$ and the calibration size; exact-reference runs use smaller samples as explicitly noted.

\paragraph{Synthetic sample sizes and distributions.}
Each repetition regenerates fresh synthetic data with separate train, calibration, scale-estimation, and test subsets determined by the YAML configuration. Low-dimensional regimes use benchmark-scale repetitions with large test sets so that stratified coverage is stable. D6 uses larger nominal calibration sizes to avoid trivial high-variance behavior from undersampling alone. Auxiliary exact-method runs use smaller calibration or test sizes only when the full benchmark configuration is computationally infeasible.

\subsection{Real-data benchmark distributions, preprocessing, and predictors}

\paragraph{Common principle.}
Each real task is converted into a fixed pair $(Y,U)$ before conformal evaluation. Here $Y$ is the target composition, $U$ is the predictor output on the simplex, and the conformal benchmark operates only on repeated calibration/test splits of these frozen arrays. This design keeps the paper focused on uncertainty wrappers rather than on repeated predictor retraining.

\paragraph{CIFAR-10 softmax.}
For CIFAR-10~\cite{krizhevsky2009learning}, the target distribution is one-hot over 10 classes and the predictor output is a cached 10-way softmax vector. The script first checks for cached predictions under \texttt{data/processed/} and downloads the raw torchvision data only if the cache is absent. The evaluation uses the full cached 10k-sample array for the all-method comparison. Entropy of the softmax vector provides the natural prediction-space stratification and reveals a pronounced hard-versus-easy split that drives the CIFAR result.

\paragraph{20 Newsgroups topics.}
For the topic benchmark, we use the 20 Newsgroups collection originating with Lang's Newsweeder corpus~\cite{lang1995newsweeder}. Both $Y$ and $U$ are 10-dimensional topic-mixture vectors. Preprocessing fixes the topic dimensionality once, constructs topic compositions, and caches the resulting arrays before conformal evaluation. The task is designed as a case of mild, smooth heterogeneity rather than a sharply discrete partition. In the archived run, normalization-based strategies help while grouped calibration has the highest disparity among non-weighted wrappers. In the stratification analyses, Mondrian ranks first under most maps, although the fixed-map series favors Global with entropy bins (Appendix~\ref{app:additional-maps}).

\paragraph{SemEval AffectiveText.}
For AffectiveText~\cite{strapparava2007semeval}, the raw archive is extracted under \texttt{data/raw/AffectiveText.Semeval.2007}. The loader searches permissively for XML files containing headlines and gold files containing six emotion intensities, then keeps only shared instance IDs. Gold scores are clipped away from zero and row-normalized to produce the target simplex composition. Prediction generation is handled in a separate cache-building script: each headline is sent to a fixed zero-shot API prompt requesting six scores in a fixed order, responses are parsed with a strict numeric extractor, zero-sum or malformed outputs are rejected, and the full response is stored in JSONL format. The frozen benchmark cache used in this paper was produced with an OpenAI-compatible client against BigModel's \texttt{glm-4-plus} endpoint. The benchmark itself only reads the frozen cache.

\paragraph{Samson hyperspectral unmixing.}
For Samson, we use the public 95$\times$95 ROI and ground-truth benchmark files summarized by Zhu~\cite{zhu2017spectral}. The target distribution is the abundance vector over endmembers and the predictor output is the abundance estimate from NMF-style unmixing. The benchmark uses the frozen unmixing output as the simplex predictor and varies only the conformal wrapper. Because the scene has a small number of endmembers and a physically meaningful abundance geometry, its heterogeneity is well aligned with grouped calibration.

\paragraph{PBMC pseudobulk deconvolution.}
PBMC is a semi-synthetic benchmark built from the PBMC3K single-cell reference~\cite{zheng2017massively}. The script converts sparse matrices to dense arrays, checks for an existing cell-type label, and, if necessary, creates a surrogate annotation by PCA followed by 8-cluster KMeans. Pseudobulk mixtures are then generated with Dirichlet-distributed proportions and additive noise, and NNLS deconvolution is run once to obtain a fixed predictor output $U$. Subsequent repetitions only resample calibration/test splits of the frozen residuals. This is why PBMC functions as a semi-synthetic control task rather than as a deconvolution-training benchmark.

\paragraph{UTKFace age label distributions.}
UTKFace~\cite{zhang2017age} uses an image-based predictor rather than the earlier degenerate target-derived construction. The retained paper version computes thumbnail-level image features, applies an image-space kNN regressor to predict age label distributions, and then evaluates conformal wrappers on the resulting simplex outputs. This revision is essential because it removes the near-deterministic coupling between target construction and prediction that made earlier versions of the task uninformative.

\paragraph{Predictor construction summary.}
Each real-data task fixes one predictor before conformal evaluation. CIFAR-10 uses a pretrained softmax classifier, Topics uses a topic-mixture predictor, Samson uses NMF-based unmixing, PBMC uses a deconvolution predictor, UTKFace uses an image-based age-distribution predictor, and AffectiveText uses a frozen zero-shot language-model scorer with cached outputs. The conformal benchmark then varies only the uncertainty wrapper.

\paragraph{AffectiveText prompt and cache contents.}
For AffectiveText, the predictor maps each headline to six emotion scores corresponding to anger, disgust, fear, joy, sadness, and surprise. The cached benchmark run was generated with an OpenAI-compatible client pointed at BigModel's \texttt{glm-4-plus} endpoint (\texttt{https://open.bigmodel.cn/api/paas/v4}) using a fixed prompt template that requests six comma-separated scores in that order. The internal cache used to create the reported arrays stores the headline ID, parsed scores, provider/base-URL metadata, prompt template, and raw API response for auditability. The releasable artifact omits raw headlines and raw responses; it ships only headline IDs, parsed scores, redacted provider metadata, the prompt template, and the schema needed to audit or locally rebuild the cache. This avoids dependence on live API calls during benchmarking and turns the task into a frozen evaluation artifact rather than a moving target. To reduce closed-model risk further, the artifact also ships a fully open fallback rebuild path: \texttt{scripts/cache\_affective\_text\_open\_predictions.py} fits a deterministic TF--IDF+SVD+kNN out-of-fold regressor on the public SemEval headlines and writes the same JSONL cache format used by the benchmark runner. This fallback is not the main predictor used for the reported tables, but it provides a reviewer-usable local reproduction path with no API access.

\paragraph{Data cleaning summary.}
Across all real tasks, the cleaning strategy is conservative: preserve the original task target whenever possible, convert it once into a simplex-valued representation, cache the resulting $(Y,U)$ arrays, and keep all conformal evaluation downstream of those frozen artifacts. The main cleaning operations are identifier matching and normalization for AffectiveText, cache reuse for CIFAR-10, feature-based predictor revision for UTKFace, and surrogate cell-type labeling plus pseudobulk generation for PBMC.

\subsection{Asset provenance, usage terms, and compute environment}

\begin{table*}[t]
\centering
\caption{Real-benchmark asset provenance and usage terms. Because the paper contributes an evaluation artifact, these source and usage conditions are part of the benchmark specification rather than ancillary metadata.}
\label{tab:asset-provenance}
\setlength{\tabcolsep}{3pt}
\begin{tabular}{@{}p{0.12\linewidth}p{0.25\linewidth}p{0.14\linewidth}p{0.41\linewidth}@{}}
\toprule
\textbf{Task} & \textbf{Source asset} & \textbf{Citation} & \textbf{Usage terms in this benchmark} \\
\midrule
CIFAR-10 & Official CIFAR-10 image benchmark, accessed through cached torchvision downloads & \cite{krizhevsky2009learning} & Public benchmark dataset used only to build frozen softmax predictions. The paper cites the official technical report requested by the source page and does not redistribute the image archive as part of the benchmark tables or figures. \\
Topics & 20 Newsgroups text collection, accessed through scikit-learn's fetcher & \cite{lang1995newsweeder} & Public educational/research benchmark used only to derive cached topic-mixture arrays. We treat the corpus as a source-cited benchmark asset rather than republishing the raw text collection. \\
Affective-\newline Text & SemEval-2007 Task 14 headlines and gold emotion labels & \cite{strapparava2007semeval} & Shared-task benchmark data used locally for evaluation. We do not repackage the raw headline collection in the paper outputs, and the conformal benchmark operates on derived cached predictions and normalized gold labels. \\
Samson & Public 95$\times$95 Samson ROI plus endmember/abundance benchmark files used in hyperspectral unmixing & \cite{zhu2017spectral} & Academic benchmark files used unmodified with attribution. The public bundle we used does not ship an explicit license file, so we document it as a source-cited research asset and avoid presenting it as a newly redistributable paper asset. \\
PBMC & 10x Genomics PBMC3K single-cell reference, accessed through \texttt{scanpy.datasets.}\allowbreak\texttt{pbmc3k} & \cite{zheng2017massively} & Public 10x tutorial data. Our benchmark uses derived pseudobulk mixtures and frozen deconvolution outputs rather than redistributing the raw count matrix. Source terms and any permitted redistribution are documented separately in the task card. \\
UTKFace & Official UTKFace aligned-and-cropped face images & \cite{zhang2017age} & Non-commercial research only according to the dataset homepage. Copyright remains with the original image owners, so the benchmark uses derived features and predictions and does not repackage the face-image archive in the paper outputs. \\
\bottomrule
\end{tabular}
\end{table*}

\subsection{Omitted cells and computational constraints}

\paragraph{Why some table entries are marked \texttt{--}.}
The missing entries in the appendix summary tables do not indicate hidden failures. They reflect either methods that were not run in a given benchmark slice or quantities that are not meaningful for that task. Oracle rows are only defined when a meaningful residual-scale oracle is available, train-residual normalization is not well-defined for fixed deconvolution pipelines such as PBMC where no separate predictor-training residuals exist, and some exact methods were omitted in the largest high-dimensional settings because their computational cost scales poorly with the number of calibration and test points. The dagger attached to D6 in the main text denotes this exact issue: the D6$^{\dagger}$ FullCP and Jackknife+ cells are drawn from a smaller exact-reference run rather than from the benchmark-scale D6 configuration used for the split methods.

\paragraph{Weighted CP status.}
Weighted CP is included for completeness because it is an important family in the literature, but the present implementation uses inverse local-scale weights rather than covariate-shift density-ratio weights, and this inverse-scale rule is unstable on the simplex. We therefore interpret weighted results in this paper as diagnostic failure cases rather than as the final word on weighting-based calibration.

\subsection{Runtime notes}

\paragraph{Representative runtimes.}
FullCP costs roughly 2.30 seconds per repetition on D2, 3.35 on Topics, 3.71 on Samson, 3.25 on full-scale CIFAR-10, 0.70 on the 2k-reference UTKFace run, and 0.31 on AffectiveText. These are recorded runtimes for the respective configurations, not a hardware-independent complexity comparison. Split, partition, and normalization-based methods are typically cheaper.

\begin{table}[t]
\centering
\caption{Compute environment and repetition counts for the benchmark.}
\label{tab:compute}
\setlength{\tabcolsep}{3pt}
\begin{tabular}{@{}p{0.32\linewidth}p{0.6\linewidth}@{}}
\toprule
\textbf{Item} & \textbf{Benchmark setting} \\
\midrule
CPU-only conformal evaluation on a personally owned Apple MacBook Air. \\
Predictor regeneration & Not part of repeated benchmarking. Each task first constructs or caches a fixed simplex predictor $U$, after which all conformal runs operate on frozen arrays. \\
Synthetic repetitions & 200 random calibration/test splits per regime. \\
Real repetitions & 50 random calibration/test splits for most tasks; AffectiveText and PBMC use 200. \\
Cheap methods & Global, Mondrian/partition, and most normalization-style methods are typically sub-second per repetition once $(Y,U)$ is cached. \\
Expensive methods & FullCP: 2.30 sec/rep on D2, 3.35 on Topics, 3.71 on Samson, and 3.25 on full-scale CIFAR-10. \\
\bottomrule
\end{tabular}
\end{table}

\paragraph{High-dimensional exact methods.}
The D6 high-$K$ setting exposes a practical limitation of exact conformal strategies. At the benchmark-scale configuration used in the main synthetic suite, exact and leave-one-out style methods were too slow to include in the main comparison table on the same footing as the split methods. Throughout this appendix, D6$^{\dagger}$ means that the FullCP and Jackknife+ entries come from a smaller exact-reference run with $n_{\mathrm{train}}=250$, $n_{\mathrm{cal}}=250$, $n_{\mathrm{test}}=1000$, and 50 repetitions, whereas the other D6 cells use the main high-$K$ benchmark scale. We keep those exact-reference entries because they clarify the qualitative placement of exact methods in the high-dimensional regime, but they should be read as computational reference points rather than as directly comparable leaderboard numbers.

\subsection{Representative stratified profiles and compute tradeoffs}

\begin{figure*}[t]
\centering
\includegraphics[width=0.95\textwidth]{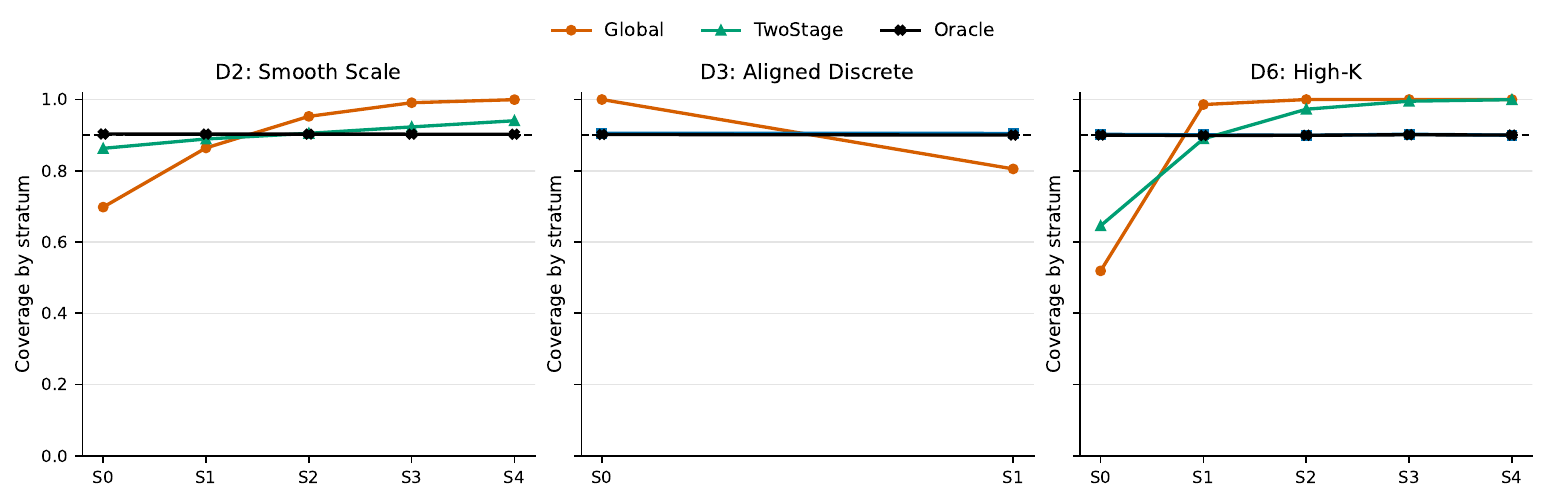}
\caption{Representative synthetic stratified coverage profiles. Normalization is competitive in D2, aligned grouped calibration helps in D3, and global coverage is uneven in D6. These profiles do not establish a unique winner among wrapper families; paired synthetic comparisons are reported separately.}
\label{fig:appendix-synth-profiles}
\end{figure*}

\begin{figure*}[t]
\centering
\includegraphics[width=0.98\textwidth]{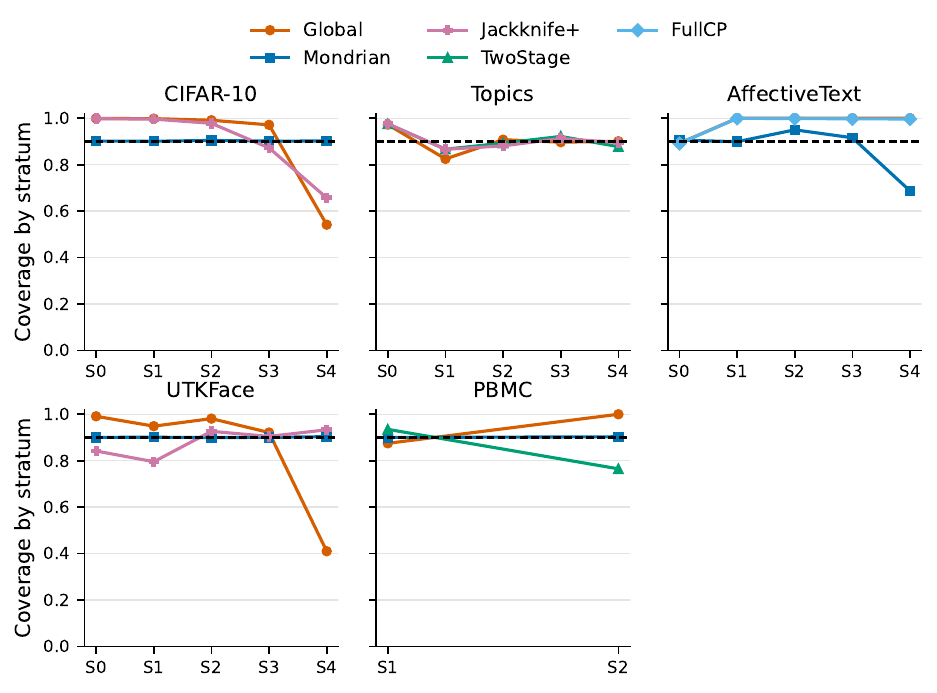}
\caption{Stratified coverage from the archived task runs used in the summary tables. Grouped repair helps on CIFAR-10 and UTKFace; Topics and AffectiveText have different rankings under their archived protocols. PBMC displays only methods from its two-stratum main protocol, not the incompatible five-stratum auxiliary runs.}
\label{fig:appendix-real-profiles}
\end{figure*}

\begin{figure}[t]
\centering
\includegraphics[width=0.75\linewidth]{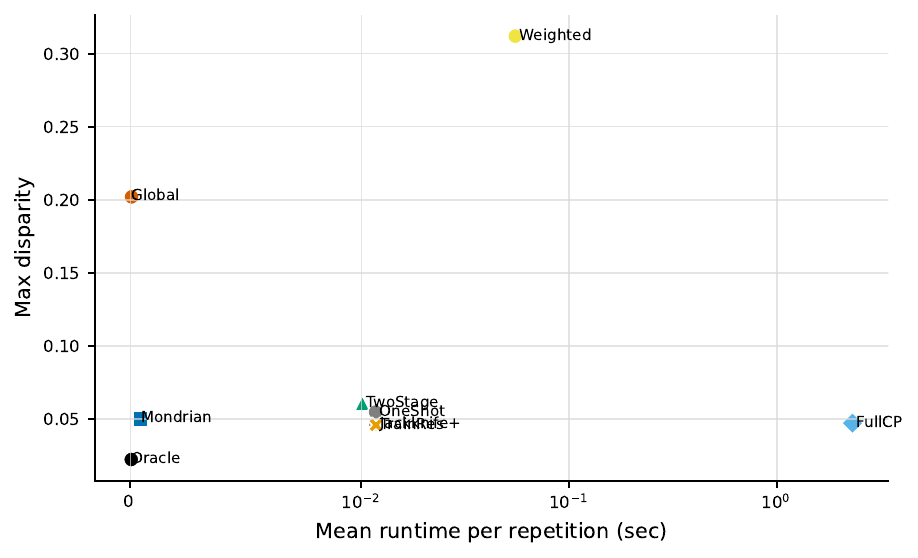}
\caption{Runtime-versus-disparity tradeoff on the smooth-scale synthetic benchmark. Exact and leave-one-out variants can improve allocation reliability, but they do so at a nontrivial computational cost relative to split, partition, and normalization-based methods.}
\label{fig:appendix-runtime}
\end{figure}

\subsection{Real-task stratification sensitivity}

\paragraph{Why this appendix is necessary.}
The main text uses one pre-declared default stratification per task, but the benchmark's core allocation claims should not rest on a single binning choice. We therefore rerun every real task under four fixed alternative stratifications that are declared before evaluation: boundary bins, entropy bins, dominant-component or dominant-group strata, and KMeans strata on the cached prediction vectors. Table~\ref{tab:real-strata-sensitivity} reports the best valid wrapper under each stratification. The resulting picture is more stable than the default-view story alone suggests: Mondrian is the best valid method on five of the six real tasks under all four fixed stratifications, and AffectiveText is the lone mixed case, flipping to Global under dominant-component bins. This robustness analysis uses the earlier task-specific runners of the archived series. The unified fixed-map sweep in Appendix~\ref{app:additional-maps} is a separate two-method experiment and is not a numerical reproduction of this table; the two differ on Topics with entropy bins. The purpose is not to claim stratification invariance, but to show which regime-level conclusions are stable, which ones are sensitive, and which tasks should be presented as ambiguity-bearing counterexamples rather than clean positives.

\input{generated_real_strata_sensitivity}

\paragraph{PBMC depends on generation and stratification.}
Mondrian remains the best empirically screened wrapper in the fixed-map sweep. The archived mixture-generation runs show a different pattern: it performs worse than Global for concentration 0.5 and similarly for concentration 2.0. The concentration-1.0 point uses the retained main run, with two populated strata. Although these runs use the boundary-based stratification family, they do not hold an identical realized partition fixed. The figure therefore documents sensitivity across configurations, not a controlled estimate of the effect of concentration alone.

\begin{figure*}[t]
\centering
\includegraphics[width=0.9\textwidth]{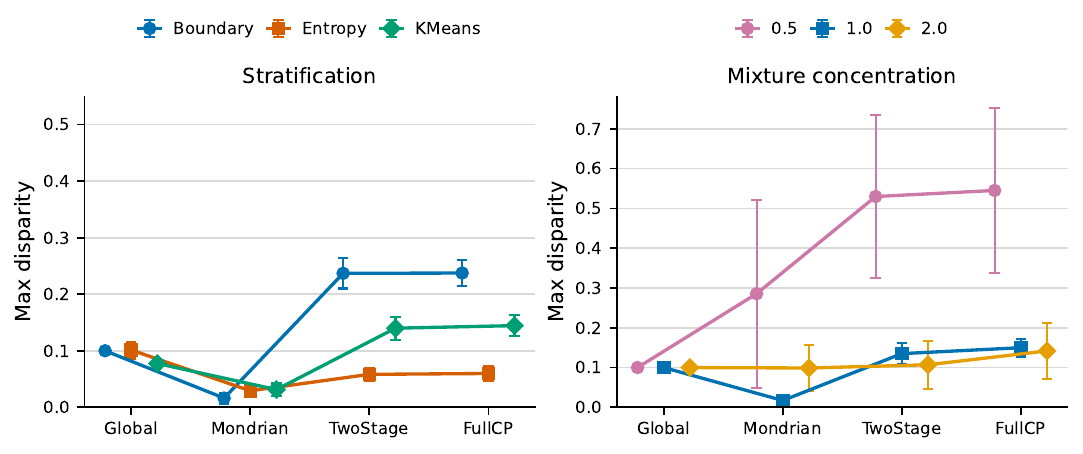}
\caption{PBMC sensitivity checks. Left: alternative fixed stratifications of frozen outputs. Right: archived pseudobulk configurations with Dirichlet concentrations 0.5, 1.0, and 2.0; the 1.0 point uses the retained two-stratum main run. The right panel changes generated data and realized strata as well as concentration, so it is descriptive rather than a concentration-only causal comparison.}
\label{fig:appendix-pbmc-sensitivity}
\end{figure*}

\input{generated_appendix_tables}

\clearpage
\input{additional_evaluation}

\clearpage

\end{document}

%% file: generated_real_strata_sensitivity.tex
\begin{table*}[t]
\centering
\caption{Real-task stratification sensitivity across fixed alternative strata. Each cell reports the best valid method under that stratification, shown as method name with $(\text{max disparity}, \text{marginal coverage})$. A method is treated as valid when mean marginal coverage is at least nominal minus the configured tolerance.}
\label{tab:real-strata-sensitivity}
\scriptsize
\resizebox{\textwidth}{!}{%
\begin{tabular}{@{}lccccp{3.3cm}@{}}
\toprule
Task & Boundary & Entropy & Dominant & KMeans & Winner stability \\
\midrule
CIFAR-10 & Mondrian (0.021, 0.901) & Mondrian (0.022, 0.902) & Mondrian (0.020, 0.902) & Mondrian (0.024, 0.902) & Stable: Mondrian \\
Topics & Mondrian (0.024, 0.902) & Mondrian (0.025, 0.902) & Mondrian (0.038, 0.901) & Mondrian (0.028, 0.902) & Stable: Mondrian \\
AffectiveText & Mondrian (0.070, 0.919) & Mondrian (0.071, 0.919) & Global (0.106, 0.906) & Mondrian (0.073, 0.918) & Mixed: Global, Mondrian \\
Samson & Mondrian (0.020, 0.902) & Mondrian (0.025, 0.903) & Mondrian (0.011, 0.901) & Mondrian (0.079, 0.903) & Stable: Mondrian \\
UTKFace & Mondrian (0.014, 0.900) & Mondrian (0.013, 0.901) & Mondrian (0.027, 0.901) & Mondrian (0.019, 0.902) & Stable: Mondrian \\
PBMC & Mondrian (0.016, 0.902) & Mondrian (0.029, 0.903) & Mondrian (0.031, 0.903) & Mondrian (0.031, 0.904) & Stable: Mondrian \\
\bottomrule
\end{tabular}
}
\end{table*}

%% file: generated_appendix_tables.tex
\FloatBarrier\subsection{Real-task summary tables}

\begin{table*}[!htbp]\centering
\scriptsize
\caption{Real-task metrics from the same saved sources as the original figure. $\dagger$ denotes reference-only cells: UTKFace FullCP uses 2,000 samples; PBMC auxiliary methods use a different five-stratum evaluation from the two-stratum main run. These cells do not establish head-to-head rankings. Weighted is an inverse-scale diagnostic. Runtime is taken from each method's own run, not another configuration.}\label{tab:real-summary-1}
\setlength{\tabcolsep}{4pt}
\begin{tabular}{@{}lllllll@{}}\toprule
Task & Method & Coverage & Worst & Disparity & Radius & Runtime (s) \\\midrule
CIFAR-10 & Global & 0.900 & 0.542 & 0.358 & 0.153 & 0.000 \\
CIFAR-10 & Mondrian & 0.902 & 0.886 & 0.022 & 0.188 & 0.001 \\
CIFAR-10 & TwoStage & 0.901 & 0.641 & 0.259 & 0.119 & 0.079 \\
CIFAR-10 & FullCP & 0.900 & 0.656 & 0.244 & 0.107 & 3.247 \\
CIFAR-10 & Jackknife+ & 0.900 & 0.657 & 0.243 & 0.108 & 0.151 \\
CIFAR-10 & OneShot & 0.882 & 0.609 & 0.291 & 0.091 & 0.150 \\
CIFAR-10 & TrainRes & 0.901 & 0.663 & 0.237 & 0.106 & 0.150 \\
CIFAR-10 & \makecell[l]{Weighted\\(inv-scale)} & 0.130 & 0.000 & 0.900 & 0.000 & 0.401 \\
Samson & Global & 0.902 & 0.762 & 0.138 & 27.146 & 0.000 \\
Samson & Mondrian & 0.903 & 0.884 & 0.025 & 28.284 & 0.001 \\
Samson & TwoStage & 0.900 & 0.800 & 0.101 & 19.716 & 0.020 \\
Samson & FullCP & 0.900 & 0.805 & 0.096 & 19.257 & 3.712 \\
Samson & Jackknife+ & 0.901 & 0.805 & 0.095 & 19.271 & 0.027 \\
Samson & OneShot & 0.890 & 0.783 & 0.117 & 19.000 & 0.027 \\
Samson & TrainRes & 0.901 & 0.803 & 0.097 & 19.163 & 0.027 \\
Samson & \makecell[l]{Weighted\\(inv-scale)} & 0.622 & 0.510 & 0.390 & 26.664 & 0.296 \\
Topics & Global & 0.899 & 0.820 & 0.098 & 6.025 & 0.000 \\
Topics & Mondrian & 0.900 & 0.654 & 0.276 & 6.042 & 0.000 \\
Topics & TwoStage & 0.899 & 0.854 & 0.092 & 5.842 & 0.070 \\
Topics & FullCP & 0.899 & 0.857 & 0.091 & 5.813 & 3.347 \\
Topics & Jackknife+ & 0.900 & 0.857 & 0.091 & 5.817 & 0.133 \\
Topics & OneShot & 0.888 & 0.847 & 0.092 & 5.735 & 0.132 \\
Topics & TrainRes & 0.901 & 0.861 & 0.093 & 5.791 & 0.131 \\
Topics & \makecell[l]{Weighted\\(inv-scale)} & 0.189 & 0.165 & 0.735 & 3.611 & 0.211 \\
\bottomrule\end{tabular}\end{table*}

\begin{table*}[!htbp]\centering
\scriptsize
\caption{Real-task metrics from the same saved sources as the original figure. $\dagger$ denotes reference-only cells: UTKFace FullCP uses 2,000 samples; PBMC auxiliary methods use a different five-stratum evaluation from the two-stratum main run. These cells do not establish head-to-head rankings. Weighted is an inverse-scale diagnostic. Runtime is taken from each method's own run, not another configuration.}\label{tab:real-summary-2}
\setlength{\tabcolsep}{4pt}
\begin{tabular}{@{}lllllll@{}}\toprule
Task & Method & Coverage & Worst & Disparity & Radius & Runtime (s) \\\midrule
AffectiveText & Global & 0.904 & 0.892 & 0.100 & 29.073 & 0.000 \\
AffectiveText & Mondrian & 0.902 & 0.582 & 0.356 & 28.893 & 0.000 \\
AffectiveText & TwoStage & 0.905 & 0.893 & 0.100 & 86.693 & 0.003 \\
AffectiveText & FullCP & 0.903 & 0.889 & 0.101 & 38.979 & 0.308 \\
AffectiveText & Jackknife+ & 0.894 & 0.880 & 0.101 & 37.746 & 0.004 \\
AffectiveText & OneShot & 0.890 & 0.873 & 0.102 & 34.973 & 0.004 \\
AffectiveText & TrainRes & 0.903 & 0.891 & 0.101 & 44.619 & 0.004 \\
AffectiveText & \makecell[l]{Weighted\\(inv-scale)} & 0.770 & 0.753 & 0.152 & 24.811 & 0.006 \\
UTKFace & Global & 0.900 & 0.410 & 0.490 & 3.164 & 0.000 \\
UTKFace & Mondrian & 0.901 & 0.885 & 0.021 & 1.892 & 0.001 \\
UTKFace & TwoStage & 0.901 & 0.790 & 0.116 & 4.713 & 0.075 \\
UTKFace & FullCP$^{\dagger}$ & 0.902 & 0.764 & 0.141 & 2.904 & 0.703 \\
UTKFace & Jackknife+ & 0.900 & 0.795 & 0.105 & 1.524 & 0.114 \\
UTKFace & OneShot & 0.887 & 0.778 & 0.122 & 1.379 & 0.116 \\
UTKFace & TrainRes & 0.900 & 0.793 & 0.107 & 1.478 & 0.117 \\
UTKFace & \makecell[l]{Weighted\\(inv-scale)} & 0.335 & 0.000 & 0.900 & 0.000 & 2.404 \\
PBMC & Global & 0.900 & 0.875 & 0.100 & 42.265 & 0.000 \\
PBMC & Mondrian & 0.901 & 0.893 & 0.017 & 39.982 & 0.000 \\
PBMC & TwoStage & 0.901 & 0.765 & 0.135 & 38.007 & 0.018 \\
PBMC & FullCP & 0.900 & 0.750 & 0.150 & 37.134 & 1.055 \\
PBMC & Jackknife+$^{\dagger}$ & 0.900 & 0.526 & 0.374 & 37.216 & 0.047 \\
PBMC & OneShot$^{\dagger}$ & 0.892 & 0.525 & 0.375 & 36.744 & 0.046 \\
PBMC & TrainRes$^{\dagger}$ & 0.900 & 0.535 & 0.365 & 37.019 & 0.028 \\
PBMC & \makecell[l]{Weighted\\(inv-scale)}$^{\dagger}$ & 0.638 & 0.588 & 0.312 & 30.147 & 0.142 \\
\bottomrule\end{tabular}\end{table*}

\clearpage\FloatBarrier\subsection{Per-strata coverage tables}

\begin{table*}[!htbp]\centering
\small
\caption{CIFAR-10: mean stratum coverage from the corresponding summary sources. Reference cells ($\dagger$) are not paired with main-run cells. For PBMC the two maps have different group meanings and are tabulated separately.}\label{tab:strata-cifar-10}
\setlength{\tabcolsep}{4pt}
\begin{tabular}{@{}lllllllll@{}}\toprule
Stratum & Global & Mondrian & TwoStage & FullCP & Jackknife+ & OneShot & TrainRes & Weighted \\\midrule
S0 & 0.999 & 0.902 & 0.999 & 0.998 & 0.998 & 0.998 & 0.999 & 0.650 \\
S1 & 0.998 & 0.901 & 0.998 & 0.996 & 0.996 & 0.995 & 0.996 & 0.000 \\
S2 & 0.992 & 0.905 & 0.983 & 0.978 & 0.978 & 0.970 & 0.977 & 0.000 \\
S3 & 0.972 & 0.902 & 0.882 & 0.872 & 0.872 & 0.838 & 0.871 & 0.000 \\
S4 & 0.542 & 0.902 & 0.641 & 0.656 & 0.657 & 0.609 & 0.663 & 0.000 \\
\bottomrule\end{tabular}\end{table*}

\begin{table*}[!htbp]\centering
\small
\caption{Samson: mean stratum coverage from the corresponding summary sources. Reference cells ($\dagger$) are not paired with main-run cells. For PBMC the two maps have different group meanings and are tabulated separately.}\label{tab:strata-samson}
\setlength{\tabcolsep}{4pt}
\begin{tabular}{@{}lllllllll@{}}\toprule
Stratum & Global & Mondrian & TwoStage & FullCP & Jackknife+ & OneShot & TrainRes & Weighted \\\midrule
S0 & 0.762 & 0.902 & 0.953 & 0.956 & 0.956 & 0.953 & 0.955 & 0.649 \\
S1 & 0.972 & 0.902 & 0.883 & 0.885 & 0.885 & 0.878 & 0.888 & 0.529 \\
S2 & 0.958 & 0.902 & 0.895 & 0.900 & 0.901 & 0.884 & 0.901 & 0.510 \\
S3 & 0.934 & 0.904 & 0.893 & 0.877 & 0.878 & 0.864 & 0.879 & 0.673 \\
S4 & 1.000 & 0.904 & 0.800 & 0.805 & 0.805 & 0.783 & 0.803 & 0.873 \\
\bottomrule\end{tabular}\end{table*}

\begin{table*}[!htbp]\centering
\small
\caption{Topics: mean stratum coverage from the corresponding summary sources. Reference cells ($\dagger$) are not paired with main-run cells. For PBMC the two maps have different group meanings and are tabulated separately.}\label{tab:strata-topics}
\setlength{\tabcolsep}{4pt}
\begin{tabular}{@{}lllllllll@{}}\toprule
Stratum & Global & Mondrian & TwoStage & FullCP & Jackknife+ & OneShot & TrainRes & Weighted \\\midrule
S0 & 0.973 & 0.729 & 0.976 & 0.976 & 0.976 & 0.976 & 0.977 & 0.887 \\
S1 & 0.825 & 0.913 & 0.867 & 0.866 & 0.866 & 0.856 & 0.871 & 0.274 \\
S2 & 0.908 & 0.903 & 0.893 & 0.881 & 0.881 & 0.869 & 0.885 & 0.212 \\
S3 & 0.897 & 0.899 & 0.922 & 0.912 & 0.913 & 0.901 & 0.912 & 0.165 \\
S4 & 0.901 & 0.900 & 0.878 & 0.895 & 0.895 & 0.883 & 0.896 & 0.200 \\
\bottomrule\end{tabular}\end{table*}

\begin{table*}[!htbp]\centering
\small
\caption{AffectiveText: mean stratum coverage from the corresponding summary sources. Reference cells ($\dagger$) are not paired with main-run cells. For PBMC the two maps have different group meanings and are tabulated separately.}\label{tab:strata-affectivetext}
\setlength{\tabcolsep}{4pt}
\begin{tabular}{@{}lllllllll@{}}\toprule
Stratum & Global & Mondrian & TwoStage & FullCP & Jackknife+ & OneShot & TrainRes & Weighted \\\midrule
S0 & 0.892 & 0.906 & 0.894 & 0.892 & 0.882 & 0.877 & 0.892 & 0.753 \\
S1 & 1.000 & 0.899 & 0.996 & 1.000 & 0.999 & 0.999 & 0.999 & 0.903 \\
S2 & 1.000 & 0.950 & 0.995 & 0.999 & 0.997 & 0.996 & 0.997 & 0.903 \\
S3 & 1.000 & 0.916 & 0.997 & 0.998 & 0.997 & 0.995 & 0.998 & 0.936 \\
S4 & 1.000 & 0.687 & 0.998 & 0.997 & 0.997 & 0.994 & 1.000 & 0.973 \\
\bottomrule\end{tabular}\end{table*}

\begin{table*}[!htbp]\centering
\small
\caption{UTKFace: mean stratum coverage from the corresponding summary sources. Reference cells ($\dagger$) are not paired with main-run cells. For PBMC the two maps have different group meanings and are tabulated separately.}\label{tab:strata-utkface}
\setlength{\tabcolsep}{4pt}
\begin{tabular}{@{}lllllllll@{}}\toprule
Stratum & Global & Mondrian & TwoStage & FullCP$^{\dagger}$ & Jackknife+ & OneShot & TrainRes & Weighted \\\midrule
S0 & 0.991 & 0.901 & 0.811 & 0.988 & 0.842 & 0.831 & 0.844 & 0.456 \\
S1 & 0.949 & 0.901 & 0.800 & 0.973 & 0.796 & 0.778 & 0.793 & 0.378 \\
S2 & 0.981 & 0.900 & 0.917 & 0.987 & 0.926 & 0.921 & 0.927 & 0.397 \\
S3 & 0.921 & 0.901 & 0.905 & 0.861 & 0.905 & 0.891 & 0.905 & 0.327 \\
S4 & 0.410 & 0.904 & 0.999 & 0.764 & 0.933 & 0.902 & 0.931 & 0.000 \\
\bottomrule\end{tabular}\end{table*}

\begin{table*}[!htbp]\centering
\small
\caption{PBMC: mean stratum coverage from the corresponding summary sources. main run. Reference cells ($\dagger$) are not paired with main-run cells. For PBMC the two maps have different group meanings and are tabulated separately.}\label{tab:strata-pbmc}
\setlength{\tabcolsep}{4pt}
\begin{tabular}{@{}lllll@{}}\toprule
Stratum & Global & Mondrian & TwoStage & FullCP \\\midrule
S1 & 0.875 & 0.900 & 0.935 & 0.937 \\
S2 & 1.000 & 0.903 & 0.765 & 0.750 \\
\bottomrule\end{tabular}\end{table*}

\begin{table*}[!htbp]\centering
\small
\caption{PBMC: mean stratum coverage from the corresponding summary sources. reference run. Reference cells ($\dagger$) are not paired with main-run cells. For PBMC the two maps have different group meanings and are tabulated separately.}\label{tab:strata-pbmc-reference}
\setlength{\tabcolsep}{4pt}
\begin{tabular}{@{}lllll@{}}\toprule
Stratum & Jackknife+$^{\dagger}$ & OneShot$^{\dagger}$ & TrainRes$^{\dagger}$ & Weighted$^{\dagger}$ \\\midrule
S0 & 0.919 & 0.911 & 0.917 & 0.588 \\
S1 & 0.759 & 0.753 & 0.764 & 0.970 \\
S2 & 0.802 & 0.795 & 0.808 & 0.987 \\
S3 & 0.833 & 0.826 & 0.836 & 0.974 \\
S4 & 0.530 & 0.528 & 0.539 & 0.993 \\
\bottomrule\end{tabular}\end{table*}

\clearpage\FloatBarrier\subsection{Runtime comparison}

\begin{table*}[!htbp]\centering
\small
\caption{Seconds per repetition from each cell's own result file. UTKFace FullCP and PBMC auxiliary timings retain the reference-run limitations of Tables~\ref{tab:real-summary-1}--\ref{tab:real-summary-2}; they are not normalized speedup comparisons.}\label{tab:runtime-real}
\setlength{\tabcolsep}{4pt}
\begin{tabular}{@{}lllllll@{}}\toprule
Method & CIFAR-10 & Samson & Topics & AffectiveText & UTKFace & PBMC \\\midrule
Global & 0.0001 & 0.0001 & 0.0001 & 0.0001 & 0.0002 & 0.0000 \\
Mondrian & 0.0005 & 0.0006 & 0.0005 & 0.0002 & 0.0012 & 0.0001 \\
TwoStage & 0.0794 & 0.0204 & 0.0703 & 0.0027 & 0.0752 & 0.0181 \\
FullCP & 3.2473 & 3.7123 & 3.3472 & 0.3085 & 0.7031 & 1.0547 \\
Jackknife+ & 0.1507 & 0.0274 & 0.1325 & 0.0036 & 0.1135 & 0.0471 \\
OneShot & 0.1495 & 0.0270 & 0.1317 & 0.0036 & 0.1161 & 0.0462 \\
TrainRes & 0.1501 & 0.0272 & 0.1309 & 0.0036 & 0.1172 & 0.0281 \\
Weighted & 0.4013 & 0.2958 & 0.2108 & 0.0059 & 2.4042 & 0.1421 \\
\bottomrule\end{tabular}\end{table*}

%% file: additional_evaluation.tex
\section{Supplementary Comparisons and Robustness Experiments}
\label{app:additional-evaluation}

\subsection{Terminology and interpretation}
An exchangeable joint distribution is unchanged by permuting its observations. Marginal validity averages coverage over the calibration sample and a new test point under the stated assumptions; prediction-space allocation describes how empirical coverage differs across subsets defined by $U$. A stratum is one cell of a partition $g(U)$. A local scale is a positive function used to normalize a residual before calibration. Mondrian calibrates separately within one partition; multivalid methods target a supplied family of groups that may overlap.

Aitchison distance is Euclidean distance in orthonormal ILR coordinates. Total variation is $\tfrac12\sum_k|y_k-u_k|$, equal to $1-u_j$ for one-hot target $y=e_j$. The latter is defined on the simplex boundary, unlike unregularized log-ratio distance. A radius is a threshold in the task's score units, not a volume. Larger radii give nested sets at a fixed center, but equal radii at different centers or under different scores need not imply equal volume. Neither absolute radii nor their rankings should be compared across tasks.

\input{audited_radius_ratios}
\FloatBarrier

\paragraph{Scope of the archived comparisons.}
The main heatmap retains the task-specific sources rather than substituting a different fixed-map experiment for each task. CIFAR-10 uses the explicitly fixed entropy run. Other historical configuration files do not uniformly record a fixed-map flag, so their rankings are descriptive, not proof of a protocol-independent advantage. PBMC's main Global, Mondrian, TwoStage, and FullCP records have two populated strata; its auxiliary Jackknife+, OneShot, TrainRes, and Weighted records have five. Their subgroup metrics are therefore not directly comparable. UTKFace FullCP is a separate 2,000-sample reference. The summary tables mark these cells, and PBMC stratum profiles are separated by protocol. This distinction is also essential when interpreting radius and runtime.

\subsection{Experiment series and provenance}
The original benchmark figures and tables and the supplementary experiments below are distinct series. The four-map sweep, count sweep, and real-task BatchMVP comparison use released frozen arrays with their own configurations and splits. They are robustness analyses, not numerical reproductions of every original table cell. Cached predictors, preprocessing, calibration fractions, and repetition counts can differ between series. In particular, agreement on a map name such as ``entropy'' is not sufficient to establish identical groups or samples. Pairing is claimed only within a specified experiment; cross-series comparisons are descriptive.

Both sensitivity sweeps use 40\% calibration and 60\% test observations, with 50 repetitions for most real tasks and 200 for AffectiveText and PBMC. Maps are computed from the complete response-free prediction array before splitting and remain fixed over repetitions. The BatchMVP experiment uses 200 synthetic repetitions and the same real-task repetition counts as these sweeps. Its synthetic draws are separate from the original all-wrapper runs. JSON result snapshots and their SHA256 manifest accompany this supplement.

\subsection{Paired synthetic comparisons}
\label{app:additional-synthetic-ci}
TwoStage is the primary normalization wrapper; OneShot and TrainRes diagnose alternative scale-fitting choices. FullCP and Jackknife+ are local-scale and leave-one-out references, not substitutes for comparing TwoStage directly. The following percentile bootstrap intervals use paired repetition differences from the saved synthetic runs. Auxiliary runs share the corresponding seeds and configurations; smaller D6 exact-reference runs are excluded from these main-scale paired comparisons. Intervals quantify repetition variability and are not multiplicity-adjusted.

In D2, normalization improves on Global, but Mondrian has lower disparity than TwoStage and its difference from FullCP is not resolved by the interval. In D5, the TwoStage--Mondrian and TrainRes--Jackknife+ comparisons also include zero. Smooth heterogeneity therefore supports several competitive repairs, not a uniquely best wrapper family.
\input{generated_synthetic_ci}
\FloatBarrier

\subsection{Group-wise and multivalid evaluation}
\label{app:additional-multivalid}
We evaluate an adaptation of BatchMVP~\cite{jung2023batch} with Global and Mondrian on all 12 tasks, sharing splits within each task. Candidate groups include the universal set, default strata, entropy bins, boundary bins, and grouped dominant components. Duplicate groups and groups with less than 5\% support in the response-free pool are removed, so the retained family need not contain every small default stratum. All three methods are evaluated on identical test observations and retained groups.

We report default-partition disparity $D_g$ and overlapping-family disparity $D_{\mathcal H}$. The latter evaluates coverage over groups, not coverage jointly conditional on group membership and threshold value, and is therefore not a complete test of multivalidity. The implementation transforms scores by $h(r)=r/(1+r)$, uses 1,000 midpoint thresholds and at most 1,000 patches, and initializes at the lowest threshold. Each patch targets the group/threshold cell with largest count-weighted squared coverage deviation, computes its empirical target quantile, and rounds to the grid. Patching stops when the grid index does not change or the cap is reached. Quantile interpolation, discretization, stopping, and group support can affect empirical behavior; we do not claim a new finite-sample guarantee for this implementation.

BatchMVP lowers mean $D_g$ against Global on 10/12 tasks and mean $D_{\mathcal H}$ on 11/12. Against Mondrian it lowers mean $D_{\mathcal H}$ on D2, D6, Topics, PBMC, and UTKFace, while Mondrian has lower $D_g$ on all 12 tasks. D2's overlap difference is small and its interval includes zero. BatchMVP mean coverage ranges from 0.885 to 0.962. All tasks pass the one-sided 0.88 screen, but CIFAR-10 (0.953) and Samson (0.962) overcover relative to 0.90. These results distinguish the two evaluation objectives without establishing universal superiority of either method.
\input{generated_batch_full}
\FloatBarrier

\subsection{Fixed-map definition and count sweeps}
\label{app:additional-maps}
Table~\ref{tab:additional-map-sweep} compares Global and Mondrian under boundary, entropy, grouped dominant-component, and ILR KMeans maps. Unlike Table~\ref{tab:real-strata-sensitivity}, it does not select the best valid method from the entire wrapper suite. Mondrian has lower mean disparity in 23/24 cells, with 21 paired intervals excluding zero in its favor. Topics--entropy favors Global; Topics--KMeans and AffectiveText--dominant intervals include zero. This is a separate released-array experiment, so its exception need not match the original all-method table. Intervals use 10,000 paired bootstrap resamples.
\input{additional_map_sweep}
\FloatBarrier

The count sweep requests 2, 3, 5, or 10 groups under the same fixed-map protocol. CIFAR-10, Samson, and UTKFace favor Mondrian throughout; Topics favors Global under entropy bins. AffectiveText changes ordering with partition granularity. Ties reduce realized PBMC group counts to one at requested counts 2/3, and two/three at requested counts 5/10. We report realized counts and minimum calibration support rather than assuming that every requested bin exists.
\input{generated_strata_count}
\FloatBarrier

\subsection{Controlled predictor-quality experiment}
\label{app:additional-predictor}
We vary D4 predictor bias while keeping the latent data-generating mechanism fixed. For $z=\operatorname{ilr}(\mu(X))\in\mathbb R^2$, let $r=\|z\|_2$ and $\theta=\operatorname{atan2}(z_2,z_1+10^{-8})$. Define
\[
b_\beta(z)=\beta(0.5+\tanh r)\begin{pmatrix}\cos(2\theta)\\\sin(2\theta)\end{pmatrix},\qquad
U_\beta=\operatorname{ilr}^{-1}(z+b_\beta(z)).
\]
Targets remain $Y=\operatorname{ilr}^{-1}(z+\sigma(\mu)\varepsilon)$. This is an angularly varying displacement, not a rigid rotation. We use $K=3$, $\sigma_{\min}=0.1$, $c=0.15$, and $\beta\in\{0.45,0.30,0.15,0\}$. The 200 repetitions share latent draws, noise, and calibration/test indices across bias levels, with 500 calibration and 5,000 test points. Boundary maps are recomputed on the complete $U_\beta$ array at each level, then fixed across calibration and test. Thus samples are paired, but the population partition is not held unchanged.

Removing bias reduces mean Aitchison score from 0.552 to 0.163. Global disparity falls from 0.151 to 0.128, a paired change of $-0.0236\,[-0.0312,-0.0165]$; worst-stratum coverage rises from 0.750 to 0.773, a change of $+0.0230\,[0.0154,0.0307]$. Mondrian disparity changes from 0.051 to 0.050, with difference $-0.0008\,[-0.0034,0.0018]$. Intervals are 95\% paired percentile bootstrap intervals with 10,000 resamples. Residual scale heterogeneity remains, and allocation is not monotone at every intermediate bias level. This is controlled evidence for investigating predictor quality, not a guarantee about retraining real-world predictors.
\input{generated_predictor_sweep}
\FloatBarrier

%% file: audited_radius_ratios.tex
\begin{table*}[!htbp]\centering
\small
\caption{Within-task ratios of mean radii, Mondrian divided by Global, computed from the same saved runs as the corrected summary tables. Ratios describe radial extent, not volume or clinical utility.}\label{tab:radius-ratios}
\setlength{\tabcolsep}{4pt}
\begin{tabular}{@{}llll@{}}\toprule
Task & Global & Mondrian & Ratio \\\midrule
CIFAR-10 & 0.153 & 0.188 & 1.225 \\
Samson & 27.146 & 28.284 & 1.042 \\
Topics & 6.025 & 6.042 & 1.003 \\
AffectiveText & 29.073 & 28.893 & 0.994 \\
UTKFace & 3.164 & 1.892 & 0.598 \\
PBMC & 42.265 & 39.982 & 0.946 \\
\bottomrule\end{tabular}\end{table*}

%% file: generated_synthetic_ci.tex
\begin{table}[!htbp]\centering\small
\caption{Paired max-disparity differences, method A minus B. Intervals are 95\% percentile bootstrap intervals from 10,000 resamples of 200 paired repetitions, without multiplicity adjustment. Positive differences favor B. Intervals containing zero do not establish equivalence.}\label{tab:paired-synthetic}
\setlength{\tabcolsep}{4pt}
\begin{tabular}{@{}lll@{}}\toprule
Task & A - B & Difference [95\% interval] \\\midrule
D2 & Global - FullCP & $+0.1548\;[+0.1503,+0.1593]$ \\
D2 & Global - TwoStage & $+0.1412\;[+0.1365,+0.1459]$ \\
D2 & Global - OneShot & $+0.1473\;[+0.1428,+0.1517]$ \\
D2 & Global - TrainRes & $+0.1561\;[+0.1512,+0.1611]$ \\
D2 & TrainRes - FullCP & $-0.0013\;[-0.0046,+0.0020]$ \\
D2 & TrainRes - Jackknife+ & $-0.0003\;[-0.0036,+0.0029]$ \\
D2 & Mondrian - FullCP & $+0.0030\;[-0.0001,+0.0060]$ \\
D2 & Mondrian - TwoStage & $-0.0107\;[-0.0141,-0.0074]$ \\
D3 & Global - Mondrian & $+0.0855\;[+0.0832,+0.0879]$ \\
D3 & FullCP - Mondrian & $+0.0765\;[+0.0741,+0.0788]$ \\
D3 & TwoStage - Mondrian & $+0.0844\;[+0.0815,+0.0874]$ \\
D5 & Global - FullCP & $+0.1041\;[+0.1003,+0.1079]$ \\
D5 & Global - TwoStage & $+0.0914\;[+0.0872,+0.0956]$ \\
D5 & Global - OneShot & $+0.0992\;[+0.0943,+0.1042]$ \\
D5 & Global - TrainRes & $+0.1060\;[+0.1016,+0.1103]$ \\
D5 & TrainRes - Jackknife+ & $-0.0016\;[-0.0044,+0.0010]$ \\
D5 & Mondrian - FullCP & $+0.0146\;[+0.0117,+0.0174]$ \\
D5 & Mondrian - TwoStage & $+0.0019\;[-0.0015,+0.0052]$ \\
D6 & Global - Mondrian & $+0.3596\;[+0.3562,+0.3631]$ \\
D6 & TwoStage - Mondrian & $+0.2330\;[+0.2296,+0.2364]$ \\
\bottomrule\end{tabular}\end{table}

%% file: generated_batch_full.tex
\begin{table}[!htbp]\centering\small
\caption{BatchMVP experiment, full mean metrics. All methods are paired within each task. The coverage screen is one-sided at 0.88; CIFAR-10 and Samson show substantial BatchMVP overcoverage.}\label{tab:batch-full}
\setlength{\tabcolsep}{4pt}
\begin{tabular}{@{}lllllll@{}}\toprule
Task & Method & Cov. & $D_g$ & $D_{\mathcal H}$ & Worst & Radius \\\midrule
D1 & Global & 0.901 & 0.020 & 0.023 & 0.891 & 0.431 \\
D1 & Mondrian & 0.907 & 0.038 & 0.039 & 0.879 & 0.438 \\
D1 & BatchMVP & 0.891 & 0.041 & 0.048 & 0.863 & 0.426 \\
D2 & Global & 0.903 & 0.198 & 0.253 & 0.702 & 0.518 \\
D2 & Mondrian & 0.917 & 0.052 & 0.069 & 0.880 & 0.499 \\
D2 & BatchMVP & 0.887 & 0.057 & 0.069 & 0.848 & 0.456 \\
D3 & Global & 0.901 & 0.109 & 0.112 & 0.805 & 1.056 \\
D3 & Mondrian & 0.905 & 0.023 & 0.032 & 0.893 & 0.742 \\
D3 & BatchMVP & 0.890 & 0.027 & 0.057 & 0.878 & 0.734 \\
D4 & Global & 0.903 & 0.135 & 0.140 & 0.767 & 0.805 \\
D4 & Mondrian & 0.914 & 0.050 & 0.071 & 0.877 & 0.791 \\
D4 & BatchMVP & 0.885 & 0.061 & 0.073 & 0.842 & 0.766 \\
D5 & Global & 0.902 & 0.139 & 0.171 & 0.762 & 0.449 \\
D5 & Mondrian & 0.913 & 0.050 & 0.056 & 0.876 & 0.456 \\
D5 & BatchMVP & 0.886 & 0.057 & 0.069 & 0.847 & 0.415 \\
D6 & Global & 0.901 & 0.381 & 0.381 & 0.519 & 2.165 \\
D6 & Mondrian & 0.901 & 0.021 & 0.037 & 0.885 & 1.776 \\
D6 & BatchMVP & 0.896 & 0.024 & 0.029 & 0.879 & 1.767 \\
CIFAR-10 & Global & 0.900 & 0.362 & 0.362 & 0.538 & 0.151 \\
CIFAR-10 & Mondrian & 0.903 & 0.022 & 0.047 & 0.885 & 0.188 \\
CIFAR-10 & BatchMVP & 0.953 & 0.099 & 0.099 & 0.889 & 0.188 \\
Topics & Global & 0.900 & 0.014 & 0.048 & 0.891 & 6.028 \\
Topics & Mondrian & 0.902 & 0.023 & 0.047 & 0.884 & 6.045 \\
Topics & BatchMVP & 0.897 & 0.028 & 0.039 & 0.878 & 5.985 \\
AffectiveText & Global & 0.905 & 0.122 & 0.128 & 0.788 & 29.095 \\
AffectiveText & Mondrian & 0.919 & 0.080 & 0.090 & 0.872 & 29.973 \\
AffectiveText & BatchMVP & 0.889 & 0.088 & 0.101 & 0.821 & 28.109 \\
Samson & Global & 0.901 & 0.248 & 0.248 & 0.652 & 27.145 \\
Samson & Mondrian & 0.902 & 0.022 & 0.062 & 0.885 & 27.975 \\
Samson & BatchMVP & 0.962 & 0.085 & 0.097 & 0.903 & 28.094 \\
PBMC & Global & 0.900 & 0.100 & 0.149 & 0.875 & 42.274 \\
PBMC & Mondrian & 0.901 & 0.017 & 0.085 & 0.893 & 39.973 \\
PBMC & BatchMVP & 0.900 & 0.021 & 0.053 & 0.888 & 38.389 \\
UTKFace & Global & 0.900 & 0.275 & 0.285 & 0.625 & 3.163 \\
UTKFace & Mondrian & 0.901 & 0.015 & 0.090 & 0.890 & 1.850 \\
UTKFace & BatchMVP & 0.919 & 0.080 & 0.080 & 0.874 & 1.562 \\
\bottomrule\end{tabular}\end{table}
\begin{table}[!htbp]\centering\small
\caption{BatchMVP minus baseline: 95\% paired bootstrap intervals (2,000 resamples). Negative differences favor BatchMVP. These are unadjusted exploratory comparisons.}\label{tab:batch-ci}
\setlength{\tabcolsep}{4pt}
\begin{tabular}{@{}llll@{}}\toprule
Task & Baseline & $\Delta D_g$ [95\% interval] & $\Delta D_{\mathcal H}$ [95\% interval] \\\midrule
D1 & Global & $+0.021\;[+0.018,+0.023]$ & $+0.026\;[+0.023,+0.028]$ \\
D1 & Mondrian & $+0.003\;[+0.000,+0.005]$ & $+0.010\;[+0.007,+0.012]$ \\
D2 & Global & $-0.141\;[-0.146,-0.136]$ & $-0.184\;[-0.190,-0.178]$ \\
D2 & Mondrian & $+0.005\;[+0.002,+0.009]$ & $-0.000\;[-0.004,+0.003]$ \\
D3 & Global & $-0.082\;[-0.084,-0.079]$ & $-0.055\;[-0.058,-0.051]$ \\
D3 & Mondrian & $+0.004\;[+0.002,+0.006]$ & $+0.026\;[+0.022,+0.030]$ \\
D4 & Global & $-0.074\;[-0.078,-0.069]$ & $-0.066\;[-0.071,-0.062]$ \\
D4 & Mondrian & $+0.011\;[+0.007,+0.014]$ & $+0.003\;[-0.001,+0.006]$ \\
D5 & Global & $-0.082\;[-0.087,-0.078]$ & $-0.102\;[-0.107,-0.097]$ \\
D5 & Mondrian & $+0.007\;[+0.004,+0.011]$ & $+0.012\;[+0.009,+0.016]$ \\
D6 & Global & $-0.357\;[-0.361,-0.354]$ & $-0.352\;[-0.355,-0.349]$ \\
D6 & Mondrian & $+0.003\;[+0.001,+0.004]$ & $-0.008\;[-0.009,-0.006]$ \\
CIFAR-10 & Global & $-0.262\;[-0.269,-0.255]$ & $-0.262\;[-0.270,-0.255]$ \\
CIFAR-10 & Mondrian & $+0.077\;[+0.075,+0.079]$ & $+0.052\;[+0.051,+0.054]$ \\
Topics & Global & $+0.013\;[+0.010,+0.016]$ & $-0.009\;[-0.013,-0.005]$ \\
Topics & Mondrian & $+0.005\;[+0.002,+0.007]$ & $-0.008\;[-0.012,-0.004]$ \\
AffectiveText & Global & $-0.034\;[-0.041,-0.026]$ & $-0.027\;[-0.034,-0.020]$ \\
AffectiveText & Mondrian & $+0.008\;[+0.002,+0.015]$ & $+0.011\;[+0.005,+0.017]$ \\
Samson & Global & $-0.163\;[-0.166,-0.160]$ & $-0.151\;[-0.153,-0.148]$ \\
Samson & Mondrian & $+0.064\;[+0.062,+0.066]$ & $+0.036\;[+0.033,+0.039]$ \\
PBMC & Global & $-0.079\;[-0.081,-0.078]$ & $-0.095\;[-0.099,-0.092]$ \\
PBMC & Mondrian & $+0.003\;[+0.002,+0.004]$ & $-0.032\;[-0.035,-0.029]$ \\
UTKFace & Global & $-0.196\;[-0.199,-0.192]$ & $-0.205\;[-0.208,-0.202]$ \\
UTKFace & Mondrian & $+0.065\;[+0.063,+0.066]$ & $-0.010\;[-0.013,-0.007]$ \\
\bottomrule\end{tabular}\end{table}

%% file: additional_map_sweep.tex
\begin{table}[!htbp]\centering\small
\caption{Supplementary released-array robustness analysis: Global (G) versus Mondrian (M), five requested groups per map. Negative paired differences favor M. Realized group counts can be smaller. This is a separate two-method experiment, not the all-method comparison in Table~\ref{tab:real-strata-sensitivity}.}\label{tab:additional-map-sweep}
\setlength{\tabcolsep}{4pt}
\begin{tabular}{@{}llllll@{}}\toprule
Task & Map & Groups & $D_G$ & $D_M$ & M - G [95\% interval] \\\midrule
CIFAR-10 & boundary & 5 & 0.259 & 0.020 & $-0.239\;[-0.245,-0.233]$ \\
CIFAR-10 & entropy & 5 & 0.358 & 0.020 & $-0.338\;[-0.345,-0.331]$ \\
CIFAR-10 & dominant & 5 & 0.049 & 0.021 & $-0.027\;[-0.030,-0.025]$ \\
CIFAR-10 & kmeans & 5 & 0.071 & 0.025 & $-0.046\;[-0.050,-0.041]$ \\
Topics & boundary & 5 & 0.045 & 0.025 & $-0.020\;[-0.024,-0.017]$ \\
Topics & entropy & 5 & 0.014 & 0.025 & $+0.011\;[+0.009,+0.013]$ \\
Topics & dominant & 5 & 0.039 & 0.028 & $-0.011\;[-0.015,-0.008]$ \\
Topics & kmeans & 5 & 0.034 & 0.032 & $-0.002\;[-0.005,+0.001]$ \\
AffectiveText & boundary & 5 & 0.120 & 0.082 & $-0.038\;[-0.046,-0.030]$ \\
AffectiveText & entropy & 5 & 0.102 & 0.065 & $-0.037\;[-0.043,-0.031]$ \\
AffectiveText & dominant & 5 & 0.106 & 0.104 & $-0.002\;[-0.007,+0.003]$ \\
AffectiveText & kmeans & 5 & 0.146 & 0.075 & $-0.071\;[-0.080,-0.062]$ \\
Samson & boundary & 5 & 0.249 & 0.022 & $-0.227\;[-0.231,-0.224]$ \\
Samson & entropy & 5 & 0.195 & 0.022 & $-0.173\;[-0.176,-0.170]$ \\
Samson & dominant & 2 & 0.084 & 0.011 & $-0.073\;[-0.075,-0.071]$ \\
Samson & kmeans & 5 & 0.900 & 0.079 & $-0.821\;[-0.827,-0.815]$ \\
PBMC & boundary & 2 & 0.100 & 0.017 & $-0.083\;[-0.084,-0.081]$ \\
PBMC & entropy & 5 & 0.145 & 0.029 & $-0.116\;[-0.120,-0.112]$ \\
PBMC & dominant & 5 & 0.077 & 0.032 & $-0.045\;[-0.048,-0.043]$ \\
PBMC & kmeans & 5 & 0.083 & 0.034 & $-0.049\;[-0.051,-0.047]$ \\
UTKFace & boundary & 5 & 0.280 & 0.013 & $-0.266\;[-0.270,-0.263]$ \\
UTKFace & entropy & 5 & 0.270 & 0.013 & $-0.257\;[-0.261,-0.253]$ \\
UTKFace & dominant & 5 & 0.091 & 0.029 & $-0.062\;[-0.067,-0.057]$ \\
UTKFace & kmeans & 5 & 0.165 & 0.017 & $-0.148\;[-0.151,-0.145]$ \\
\bottomrule\end{tabular}\end{table}

%% file: generated_strata_count.tex
\begin{table}[!htbp]\centering\small
\caption{Requested/realized group counts on released arrays. Maps are entropy for CIFAR-10, Topics and UTKFace, and boundary for the other tasks. Intervals use paired bootstrap resampling.}\label{tab:strata-count}
\setlength{\tabcolsep}{4pt}
\begin{tabular}{@{}llllll@{}}\toprule
Task & Req./real. & $D_G$ & $D_M$ & M - G [95\% interval] & Min. cal. \\\midrule
CIFAR-10 & 2/2 & 0.102 & 0.010 & $-0.092\;[-0.094,-0.090]$ & 1982.2 \\
CIFAR-10 & 3/3 & 0.188 & 0.014 & $-0.174\;[-0.179,-0.170]$ & 1310.0 \\
CIFAR-10 & 5/5 & 0.358 & 0.020 & $-0.338\;[-0.345,-0.331]$ & 774.7 \\
CIFAR-10 & 10/10 & 0.683 & 0.037 & $-0.646\;[-0.656,-0.635]$ & 377.6 \\
Topics & 2/2 & 0.007 & 0.010 & $+0.003\;[+0.002,+0.004]$ & 1493.9 \\
Topics & 3/3 & 0.010 & 0.017 & $+0.007\;[+0.006,+0.009]$ & 984.5 \\
Topics & 5/5 & 0.014 & 0.025 & $+0.011\;[+0.009,+0.013]$ & 580.0 \\
Topics & 10/10 & 0.028 & 0.039 & $+0.011\;[+0.009,+0.014]$ & 281.7 \\
AffectiveText & 2/2 & 0.026 & 0.030 & $+0.004\;[+0.002,+0.006]$ & 177.2 \\
AffectiveText & 3/3 & 0.034 & 0.044 & $+0.010\;[+0.008,+0.013]$ & 125.2 \\
AffectiveText & 5/5 & 0.120 & 0.082 & $-0.038\;[-0.046,-0.030]$ & 24.0 \\
AffectiveText & 10/9 & 0.132 & 0.099 & $-0.033\;[-0.040,-0.027]$ & 23.8 \\
Samson & 2/2 & 0.059 & 0.010 & $-0.050\;[-0.051,-0.049]$ & 1787.9 \\
Samson & 3/3 & 0.115 & 0.013 & $-0.102\;[-0.104,-0.099]$ & 1179.9 \\
Samson & 5/5 & 0.249 & 0.022 & $-0.227\;[-0.231,-0.224]$ & 698.2 \\
Samson & 10/10 & 0.430 & 0.038 & $-0.392\;[-0.397,-0.388]$ & 338.1 \\
PBMC & 2/1 & 0.007 & 0.007 & $+0.000\;[+0.000,+0.000]$ & 2000.0 \\
PBMC & 3/1 & 0.007 & 0.007 & $+0.000\;[+0.000,+0.000]$ & 2000.0 \\
PBMC & 5/2 & 0.100 & 0.017 & $-0.083\;[-0.084,-0.081]$ & 399.7 \\
PBMC & 10/3 & 0.100 & 0.031 & $-0.069\;[-0.071,-0.067]$ & 193.6 \\
UTKFace & 2/2 & 0.080 & 0.006 & $-0.074\;[-0.075,-0.073]$ & 4702.3 \\
UTKFace & 3/3 & 0.147 & 0.009 & $-0.137\;[-0.140,-0.134]$ & 2900.5 \\
UTKFace & 5/5 & 0.270 & 0.013 & $-0.257\;[-0.261,-0.253]$ & 1829.8 \\
UTKFace & 10/10 & 0.484 & 0.024 & $-0.461\;[-0.467,-0.455]$ & 887.3 \\
\bottomrule\end{tabular}\end{table}

%% file: generated_predictor_sweep.tex
\begin{table}[!htbp]\centering\small
\caption{Controlled D4 bias sweep, 200 paired repetitions. Maps are recomputed at each predictor level; the latent samples, noise, and split indices are paired.}\label{tab:predictor-sweep}
\setlength{\tabcolsep}{4pt}
\begin{tabular}{@{}lllll@{}}\toprule
Bias scale & Mean score & Global $D_g$ & Global worst & Mondrian $D_g$ \\\midrule
0.45 & 0.552 & 0.151 & 0.750 & 0.051 \\
0.3 & 0.383 & 0.141 & 0.761 & 0.051 \\
0.15 & 0.233 & 0.127 & 0.775 & 0.051 \\
0.0 & 0.163 & 0.128 & 0.773 & 0.050 \\
\bottomrule\end{tabular}\end{table}